\documentclass{article}

\PassOptionsToPackage{numbers,compress}{natbib}
\usepackage[main, final]{neurips_2026}

\usepackage[utf8]{inputenc} % allow utf-8 input
\usepackage[T1]{fontenc}    % use 8-bit T1 fonts
\usepackage[table]{xcolor}         % colors
\usepackage[pagebackref=true,breaklinks=true,letterpaper=true]{hyperref}       % hyperlinks
\definecolor{perfblue}{RGB}{64, 114, 175}
\definecolor{myred}{RGB}{194, 99, 93}
\hypersetup{
    colorlinks = true,
    citecolor= perfblue,
    linkcolor=myorange
}
\usepackage{url}            % simple URL typesetting
\usepackage{booktabs}       % professional-quality tables
\usepackage{amsfonts}       % blackboard math symbols
\usepackage{nicefrac}       % compact symbols for 1/2, etc.
\usepackage{microtype}      % microtypography

\usepackage{subcaption}
\usepackage{wrapfig}

\usepackage{csquotes}
\usepackage{comment}
\usepackage[page]{appendix}
\usepackage{amsthm}
\usepackage{mathtools}
\usepackage{algorithm}
\usepackage{multicol}
\usepackage{algcompatible}

\renewcommand{\appendixtocname}{Contents of appendices}

\usepackage{array, multirow,graphicx}
\usepackage{float}
\usepackage{pifont}% http://ctan.org/pkg/pifont
\usepackage{amsmath}

\definecolor{darkbrown}{rgb}{0.7,0.2,0.1}

\definecolor{orange}{rgb}{1,0.5,0}

\definecolor{darkgreen}{rgb}{0,0.5,0}
\definecolor{grey}{rgb}{0.7,0.7,0.7}

\definecolor{mypurple}{rgb}{0.56,0.3059,0.54}
\definecolor{myyellow}{rgb}{0.98,0.69,0.282}
\definecolor{mypink}{RGB}{242,196,195}
\definecolor{myblue}{rgb}{0.2745,0.3255,0.5647}
\definecolor{myorange}{rgb}{0.7,0.3647,0.22745}
\definecolor{mydarkred}{rgb}{0.5686,0.1333,0.141}
\definecolor{mygreen}{rgb}{0.3255,0.596,0.3}

\newcommand{\myblue}{\color{myblue}}
\newcommand{\myred}{\color{myred}}

\newcommand{\mygreen}{\color{mygreen}}
\newcommand{\myorange}{\color{myorange}}
\definecolor{colorpo}{rgb}{0.87,0.87,0.9}

\definecolor{colorro}{rgb}{0.9569,0.9451,0.932}

\definecolor{blue1}{rgb}{0.4,0.757,0.887}
\definecolor{blue2}{rgb}{0.63,0.84,0.91}
\definecolor{blue3}{rgb}{0.89,0.95,0.97}
\usepackage{diagbox}

\theoremstyle{plain}
\newtheorem{theorem}{Theorem}%[section]
\newtheorem{proposition}[theorem]{Proposition}

\theoremstyle{definition}

\theoremstyle{remark}

\usepackage{bm}

\usepackage{xspace}

\usepackage{tabularx}

\makeatletter
\let\oldappendix\appendices

\renewcommand{\appendices}{%
  \clearpage
  \renewcommand{\thesection}{\Roman{section}}
  \let\tf@toc\tf@app
  \addtocontents{app}{\protect\setcounter{tocdepth}{3}}
  \immediate\write\@auxout{%
    \string\let\string\tf@toc\string\tf@app^^J
  }
  \oldappendix
}%

\newcommand{\listofappendices}{%
  \begingroup
  \renewcommand{\contentsname}{\appendixtocname}
  \let\@oldstarttoc\@starttoc
  \def\@starttoc##1{\@oldstarttoc{app}}
  \tableofcontents% Reusing the code for \tableofcontents with different \contentsname and different file handle app
  \endgroup
}

\makeatother

\title{Loop-Free Inverse Reinforcement Learning via Sequential Value Recovery with Q-Score Matching}

\author{%
  Yang Chen$^{1}$\thanks{Equal contribution.} \quad
  Yitan Zhang$^{2}$\footnotemark[1] \quad
  {\bf Michael J.~Witbrock}$^{2}$ \quad
  {\bf Shuyue Hu}$^{1}$\thanks{Corresponding author.} \\
  $^{1}$ Shanghai Artificial Intelligence Laboratory \quad
  $^{2}$ University of Auckland \\
  \texttt{chenyang4@pjlab.org.cn, yzhb332@aucklanduni.ac.nz,}\\
\texttt{m.witbrock@auckland.ac.nz, hushuyue@pjlab.org.cn}
}

\begin{document}

\maketitle

\begin{abstract}
Inverse Reinforcement Learning (IRL) aims to recover a reward function that explains expert demonstrations. Existing IRL methods typically rely on a bi-level optimization procedure that alternates between reward learning and policy optimization, leading to substantial computational burden and training instability. In this work, we introduce a different route that eliminates policy optimization entirely by leveraging diffusion policies. Our key insight is that a diffusion policy encodes the action-gradient structure of the optimal soft Q function, enabling reward learning to be cast as a sequence of value recovery problems, thereby allowing us to bypass reward-policy loops inherent in prior IRL methods. Specifically, our method proceeds in three stages: (I) recovering the optimal soft Q function via action-gradient matching and estimating the corresponding soft value function (LogSumExp of Q values) in a way inspired by Gumbel regression; (II) calibrating these soft values by inferring a state-dependent offset; (III) extracting the reward by enforcing Bellman consistency. This leads to \emph{Loop-Free Inverse Reinforcement Learning} (LFIRL), a fully offline algorithm that operates in a simple, loop-free, and sequential manner. LFIRL is simple to implement and significantly improves training efficiency while maintaining strong reward recovery performance.  Empirically, across Maze, Franka Kitchen, Adroit Hand Pen, and Push-T benchmarks, LFIRL achieves \textbf{2-3x} speedup over the fastest baselines, while matching or surpassing state-of-the-art methods in reward recovery quality.~\footnote{The implementation is available at \url{https://github.com/NothingThrough/LFIRL}.}

%Inverse Reinforcement Learning (IRL) aims to recover a reward function that explains expert demonstrations. Existing IRL methods typically have an inherent bi-level optimization structure that repeatedly executes the update {\em loops}  of the reward and the policy, which often introduces computational burden and learning instability. In this work, we introduce a different route for IRL, which recovers a reward function from expert demonstrations {\em without} policy optimization processes, drawing inspiration from the diffusion policy. Our key insight is that a diffusion policy encodes the action-gradient structure of the optimal soft Q function, which can turn reward learning into a three-stage value-recovery problems: (I) first estimate an optimal soft Q function via action gradient matching and the corresponding soft value function (LogSumExp of Q values) inspired by Gumbel regression; (II) then calibrate them by inferring a state-dependent offset; (III) finally extract the reward via enforcing Bellman consistency. This insight leads to our {\em Sequential Inverse Reinforcement Learning} (LFIRL) algorithm; it is fully {\em offline} that runs in a fully {\em sequential} manner.  This algorithm is practically simple and significantly improves the training efficiency, while preserving strong reward recovery performance. Empirically, on Maze, Franka Kitchen, AdroitHandPen, and Push-T benchmarks, LFIRL improves training efficiency by \textbf{2-3x} compared with the fastest baselines, while matching or surpassing state-of-the-art baselines in reward recovery.    
\end{abstract}

\section{Introduction}\label{sec:intro}

Inverse reinforcement learning (IRL)  \citep{ng2000algorithms,abbeel2004apprenticeship} studies how to recover a reward function from expert demonstrations, with the goal of explaining expert behavior through an underlying optimization principle. %This perspective is especially appealing when expert behavior is available but the desired objective is difficult to engineer manually, and 
It has become an important paradigm for reward design, behavior understanding, and transferable decision making \citep{kalakrishnan2013learning,sharifzadeh2016learning}. In practice, however, a major bottleneck in IRL is time efficiency due to the inherent  \emph{loop structure} underlying many classic and modern approaches \citep{abbeel2004apprenticeship,ziebart2008maximum,finn2016guided,fu2018learning,zeng2022maximum}: the alternating update of the reward function and the policy. This loop arises because most IRL methods generally cannot extract reward-supervision signals directly from demonstrations. Instead, the reward is usually learned through differentiating expert behavior from the learner behavior. This mutual dependence naturally leads to a classic {\em bi-level} optimization process, where a reward function is updated by contrasting expert demonstrations with learner behavior, the learner policy is then updated under the updated reward, and this process is repeated until convergence. Such a looped training paradigm is computationally expensive, especially in high-dimensional tasks, and also makes the overall training process harder to stabilize. This raises our central question: \emph{can we design an IRL method without the reward-policy loop while still recovering a useful reward function?}

\begin{figure}[t]
    \centering
    \includegraphics[width=\linewidth]{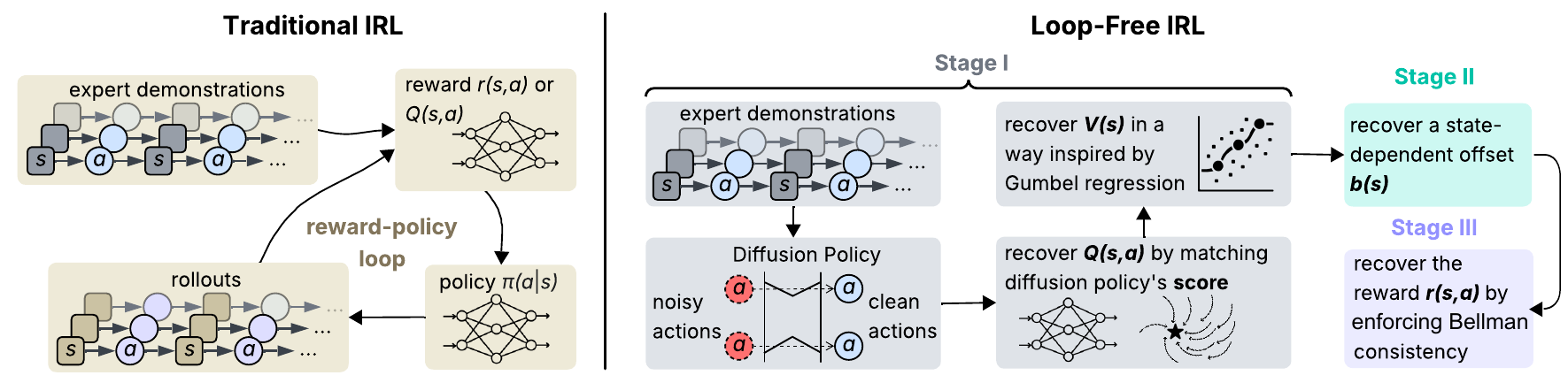}
    %\vspace{-1em}
    \caption{\small Traditional inverse reinforcement learning alternates between reward learning and policy optimization, while our loop-free version trains each component once without a reward-policy loop.} 
    \label{fig:irl_comparison}
    %\vspace{-2em}
\end{figure}

If we could extract reward-supervision signals directly from demonstrations, then reward recovery would no longer need to be mediated by a reward-policy loop. Intuitively, demonstrations already contain local information about which nearby actions are more or less preferable around expert behavior. {\em Diffusion policy} methods \citep{chi2025diffusion,psenkalearning,reuss2023goal} provide a natural way to represent this information. Built on the recently burgeoning diffusion models \citep{ho2020denoising,songscore}, a diffusion policy represents expert  action distribution by transforming noise into actions through an iterative refinement process: %Due to its strong generative modeling capability, expressive policy representation and ability to model multimodal action distributions \citep{janner2022planning}, it has achieved strong empirical performance in imitation learning \citep{wudiffusing} and reinforcement learning \citep{venkatraman2023reasoning,wang2022diffusion,zhu2023diffusion}. %and recently IRL \citep{huang2024diffusion,lai2024diffusion,yoon2024maximum}. 
it learns the gradient of the action distribution's {\em score function}, $\nabla_{\mathbf{a}} \log \pi^*(\mathbf{a}| \mathbf{s})$, that describes the direction in which actions should move towards expert ones, and iteratively optimizes wrt this gradient field via a series of stochastic steps. This score-based signal is exactly what we need to break the reward-policy loop. Specifically, in the commonly used maximum-entropy reinforcement learning \citep{haarnoja2018soft} setting, %a pretrained diffusion policy provides a score field that captures the expert action distribution, and under the maximum-entropy principle \citep{haarnoja2018soft}, 
this score function encodes information of the gradient of the expert's optimal soft Q function wrt actions, i.e., $\nabla_{\mathbf{a}} \log \pi^*(\mathbf{a}| \mathbf{s}) = \nabla_{\mathbf{a}} Q^*(\mathbf{a}, \mathbf{s})$. Therefore, by matching its action gradient, we can estimate an optimal soft Q function directly from demonstrations, without relying on the differential information from an iteratively updated learner policy. %Fig.~\ref{fig:gradient} illustrates this connection between diffusion-policy score fields and action-gradient matching.%

%\begin{wrapfigure}{r}{.4\textwidth}
%\begin{minipage}[t]{0.4\textwidth}
%    \centering
%    \includegraphics[height=4.4cm]{figs/gradient.pdf}
%    \captionof{figure}{\small In Push-T, the goal is to move the {\grey T-shaped block} to the {\mygreen target pose}. For each observation, the gradient of the soft $Q$-function with respect to the action points toward the locally optimal direction (black arrows). Moreover, the gradient magnitude decreases along the optimal action path and approaches zero near the optimum.}
%    \label{fig:gradient}
%\end{minipage}
%\end{wrapfigure}

Based on this observation, we propose a new IRL algorithm that 
%\textbf{Sequential Inverse Reinforcement Learning (LFIRL)}, which is the {\em first} IRL method that 
avoids the conventional reward-policy loops and instead recovers the reward in a {\em fully sequential} manner (see Fig.~\ref{fig:irl_comparison}). %in order to promise the time efficiency. 
Concretely, it proceeds in three sequential stages:  
\textbf{(I)} This stage first recovers an optimal soft Q function by matching its action gradients to the score induced by a diffusion policy, where we use a Sobolev-style training mechanism \citep{czarnecki2017sobolev} to favor expert actions over nearby perturbations; then estimates the corresponding soft value function (V $=$ LogSumExp Q) in a way inspired by Gumbel regression \citep{gargextreme} (Sec.~\ref{sec:qv}). 
\textbf{(II)} For deriving calibrated value functions, the second stage infers a state-dependent offset function, appending which to the already estimated Q and V yields their more accurate estimates (Sec.~\ref{sec:offset}). %producing a state-value estimate consistent with the recovered action-value structure . 
\textbf{(III)} The final stage recovers the reward function from the calibrated Q and V functions by enforcing Bellman consistency (Sec.~\ref{sec:r}). The entire pipeline is {\em fully offline}, using only a static set of trajectories. %Empirically, we evaluate the algorithm on Maze \citep{gymnasium_robotics2023github}, Franka Kitchen \citep{gymnasium_robotics2023github}, and Push-T \citep{chi2025diffusion}. 
We empirically demonstrate that 
%Across these benchmarks, 
our resulting algorithm matches or surpasses state-of-the-art IRL baselines in reward recovery while significantly reducing training time (Sec.~\ref{sec:experiment}). % by \textbf{2-5x} compared with the fastest baseline in most settings (Sec.~\ref{sec:experiment}).

Our contributions are summarized as follows:
\begin{enumerate}
    \item We introduce diffusion policy to IRL and propose the \textbf{Loop-Free Inverse Reinforcement Learning  (LFIRL)} algorithm. It is a \textbf{fully offline} IRL algorithm that {\bf avoids the computationally expensive looped reward-policy updates} in conventional IRL methods by leveraging the diffusion policy to realize a sequential value recovery procedure.
    \item We conduct empirical studies on tasks including Maze, Franka Kitchen, Adroit Hand Pen, and Push-T, showing that our method matches or surpasses strong  online and offline baselines in reward recovery while \textbf{improving training time efficiency by 2-3x} in most settings. 
\end{enumerate}

\section{Related Work}\label{sec:literature}

\textbf{Inverse reinforcement learning.} IRL has been studied since the early formulations of reward recovery and apprenticeship learning \citep{abbeel2004apprenticeship,ng2000algorithms,ratliff2006maximum,ziebart2008maximum,ziebart2010modeling}, and was later extended to deep-learning variants such as Guided Cost Learning \citep{finn2016guided}. More recent adversarial formulations remain a dominant route for IRL and imitation learning, but they typically rely on a {\em nested loop} between reward/discriminator learning and policy optimization, including GAIL \citep{ho2016generative}, AIRL \citep{fu2018learning}, DAC \citep{kostrikovdiscriminator}, and observation-only adversarial variants such as GAIfO \citep{torabi2018generative}. A broader unifying view is provided by the moment-matching perspective of imitation learning \citep{swamy2021moments}, while empirical studies such as \citep{orsini2021matters} further highlight the sensitivity of adversarial pipelines to design choices. Several recent methods reduce or modify this classical reward-policy loop: Maximum-Likelihood IRL \citep{zeng2022maximum} proposes a {\em single-loop} update, ValueDICE \citep{kostrikov2019imitation} avoids a separate RL optimization procedure through off-policy distribution matching, IQ-Learn \citep{garg2021iq} and LS-IQ \citep{al2023ls} learn implicit Q-function formulations, PIRO \citep{chen2025trust} stabilizes non-adversarial reward-policy learning through trust-region reward optimization. Other approaches modify the imitation or policy-search process in different ways, including Coherent Soft Imitation Learning \citep{watson2023coherent}, successor-feature matching \citep{jainnon}, FILTER \citep{swamy2023inverse}, and Hybrid IRL \citep{ren2024hybrid}. In contrast to both nested-loop methods and single-loop approaches that tightly entangle reward and policy learning, our method adopts a {\em fully staged, sequential} IRL pipeline: it uses diffusion-derived local action-score information to recover soft values, and then recovers rewards.

\textbf{Diffusion-Based Imitation Learning.} Diffusion models have rapidly become strong policy classes for imitation learning, especially in robotics and multimodal control. Diffusion Policy \citep{chi2025diffusion} models actions through a conditional denoising process and has become a standard baseline for visuomotor behavior cloning, while subsequent work extends diffusion-based imitation learning through goal-conditioned policies \citep{reuss2023goal}, diffusion-augmented behavioral cloning \citep{chendiffusion}, improved visual or self-supervised representations \citep{li2024crossway,Ze2024DP3}, and architectural scaling studies \citep{dasari2025ingredients,zhu2025scaling}. A smaller line of work uses diffusion models more directly for reward learning, for example by extracting reward-like quantities from diffusion models \citep{nuti2023extracting}, learning rewards from expert videos \citep{Huang2023DiffusionReward}, leveraging score information for policy optimization \citep{psenkalearning}, or integrating diffusion into adversarial imitation pipelines as the {\em discriminator} such as DiffAIL \citep{wang2024diffail}, DRAIL \citep{lai2024diffusion}, and DIFO \citep{huang2024diffusion}. Recent score-matching-based imitation methods such as \citep{wudiffusing} likewise emphasize the value of diffusion-derived gradient information. Our method differs from these directions in that we use a diffusion policy only as a score estimator for local supervision of soft $Q$-recovery, and then recover $V$ and $r$ sequentially, rather than using diffusion as the final policy class, extracting relative trajectory-level rewards from multiple diffusion models, or embedding diffusion inside an adversarial reward-policy loop.

\section{Preliminaries}\label{sec:pre}

A Markov decision process (MDP) is defined by the tuple $(\mathcal{S},\mathcal{S}_\bot, \mathcal{A},P,\gamma,r)$, where $\mathcal{S}$ and $\mathcal{A}$ denote the state and action spaces, $\mathcal{S}_\bot \subseteq \mathcal{S}$ is a set of absorbing states, $P(\cdot | \mathbf{s},\mathbf{a})$ is the transition kernel, $\gamma\in (0,1)$ is the discount factor, and $r:\mathcal{S}\times\mathcal{A}\to\mathbb{R}$ is the reward function. %and $T>0$ is the time horizon. 
For any absorbing state $\mathbf{s}_\bot\in \mathcal{S}_\bot$, we have $P(\mathbf{s}_\bot | \mathbf{s}_\bot, \mathbf{a}) = 1$ and $r(\mathbf{s}_\bot,\mathbf{a}) = 0$ for all $\mathbf{a}\in\mathcal{A}$, which describes can-never-escape states, e.g., completions or failures of a game. 
Let $\pi(\mathbf{a}| \mathbf{s})$ be a stochastic policy, and let $\rho^\pi(\mathbf{s},\mathbf{a}) \coloneqq \frac{1}{1-\gamma} \sum_{t=0}^\infty  \Pr(\mathbf{s}_t = \mathbf{s}, \mathbf{a}_t = \mathbf{a} | \pi, P) $ 
denote its induced state-action density.

\subsection{Maximum-Entropy RL and IRL}
The goal of classic RL is to find a policy to maximize the expected discounted long-term rewards $\mathbb{E}_{(\mathbf{s},\mathbf{a})\sim\rho^\pi}[r(\mathbf{s},\mathbf{a})]$. 
We consider a generalized version of 
Maximum-entropy (MaxEnt) RL that augments the reward objective with the relative entropy $\mathcal{H}_\mu(\pi)
\coloneqq
\mathbb{E}_{(\mathbf{s},\mathbf{a})\sim\rho^\pi}
[
-\log \tfrac{\pi(\mathbf{a} | \mathbf{s})}{\mu(\mathbf{a} | \mathbf{s})}
]$ between $\pi$ and a reference policy $\mu$: 
$
\mathbb{E}_{(\mathbf{s},\mathbf{a})\sim\rho^\pi}[r(\mathbf{s},\mathbf{a})]
+ \varepsilon \, \mathcal{H}_\mu(\pi)
$, where $\varepsilon>0$ is the temperature parameter.
It recovers the standard MaxEnt RL \cite{haarnoja2018soft} objective up to a constant when $\mu$ is uniform.

%\cite{haarnoja2018soft} that aims to learn a policy that maximizes the expected reward together with an entropy regularizer:
%\begin{equation}
%\label{eq:maxent_rl}
%$J(\pi;r)
%=
%$\mathbb{E}_{(\mathbf{s},\mathbf{a})\sim\rho^\pi}[r(\mathbf{s},\mathbf{a})] + \varepsilon \, \mathcal{H}(\pi)$, 
%\end{equation}
%where $\varepsilon>0$ is the temperature parameter, and $\mathcal{H}(\pi)
%\coloneqq
%\mathbb{E}_{(\mathbf{s},\mathbf{a})\sim\rho^\pi}
%\big[-\log \pi(\mathbf{a} | \mathbf{s})\big]$ denotes the causal entropy \citep{ziebart2010modeling}. 
Under this formulation of MaxEnt RL, the optimal policy admits the energy-based form
\begin{equation}
\label{eq:boltzmann_mu}
\pi^*(\mathbf{a} | \mathbf{s})=\mu(\mathbf{a} | \mathbf{s})
\exp\left( {\textstyle \frac{1}{\varepsilon}} \big(Q^*(\mathbf{s},\mathbf{a})-V^*(\mathbf{s})\big)\right),
\end{equation}
where the optimal soft value function $V^*(\mathbf{s})$ and optimal soft action value function $Q^*(\mathbf{s},\mathbf{a})$ satisfy
\begin{equation}
\label{eq:soft_q_mu}
V^*(\mathbf{s}) = \varepsilon \log \int_{\mathcal A}
\exp \left({\textstyle \frac{1}{\varepsilon}}Q^*(\mathbf{s},\mathbf{a})\right) d\mu(\mathbf{a}|\mathbf{s}),
\quad
Q^*(\mathbf{s},\mathbf{a}) = r(\mathbf{s},\mathbf{a}) +
\gamma \mathbb{E}_{\mathbf{s}'\sim P} \left[V^*(\mathbf{s}')\right].
\end{equation}
Note that the MaxEnt optimality condition in Eq.~(\ref{eq:boltzmann_mu}) implies the following relation, $\pi^*(\mathbf{a}|\mathbf{s})/\mu(\mathbf{a}|\mathbf{s})$ and $Q^*(\mathbf{a}|\mathbf{s})$ have the same gradient wrt actions, which we will leverage to build our method upon: 
\begin{equation}
\label{eq:q_derivative}
    \nabla_{\mathbf{a}} \log {\textstyle\frac{\pi^*(\mathbf{a}|\mathbf{s})}{\mu(\mathbf{a}|\mathbf{s})}}={\textstyle \frac{1}{\varepsilon}}\nabla_{\mathbf{a}} Q^*(\mathbf{s},\mathbf{a}).
    %\nabla_{\mathbf{a}} \log \pi^*(\mathbf{a} | \mathbf{s}) = \nabla_{\mathbf{a}} \log \mu(\mathbf{a}|\mathbf{s}) + {\textstyle \frac{1}{\varepsilon}}\nabla_{\mathbf{a}} Q^*(\mathbf{s},\mathbf{a}).
\end{equation}
In MaxEnt inverse RL (MaxEnt IRL), the reward function is unknown, while a set of expert demonstrations $\mathcal{D}_E=\{\tau_i\}_{i=1}^N$ is observed, where each trajectory $\tau=(\mathbf{s}_0,\mathbf{a}_0,\mathbf{s}_1,\mathbf{a}_1,\dots)$ is generated by an expert policy $\pi_E$. The goal is to recover a reward function under which $\pi_E$ is optimal under the above MaxEnt RL framework, which can be interpreted by the following optimization problem: 
\begin{equation}
\label{eq:maxent_irl_minimax}
\min_r \max_\pi \mathbb{E}_{(\mathbf{s},\mathbf{a})\sim\rho^\pi}[r(\mathbf{s},\mathbf{a})] + \varepsilon \, \mathcal{H}_\mu(\pi)
- 
\mathbb{E}_{(\mathbf{s},\mathbf{a})\sim\rho^{\pi_E}}[r(\mathbf{s},\mathbf{a})].
\end{equation}
The min-max structure of Eq.~\eqref{eq:maxent_irl_minimax} reveals an inherent {\em bi-level} optimization procedure of many classic and modern IRL methods \citep{ho2016generative,finn2016connection,fu2018learning,zeng2023demonstrations}: the upper level updates the reward by contrasting expert behavior with the learner behavior induced by the current reward, while the lower level updates the learner's imitation policy. Classic IRL methods typically implement this bi-level procedure as a computationally expensive {\em nested reward-policy loop}  because the lower level is a full RL procedure for computing an optimal policy. Some recent methods \citep{zeng2022maximum,zeng2023demonstrations} improve time efficiency by introducing a relatively cheap {\em single reward-policy loop}, where the lower level conducts one or several steps of policy improvement. 
Although this bi-level optimization structure is well theoretically grounded \citep{ho2016generative,zeng2022maximum} and has been shown to be effective in applications \citep{wu2020efficient,fulanguage}, its induced reward-policy loops introduce inevitable computational burden and instability to the optimization process.

\subsection{Diffusion Models and Diffusion Policy}

Diffusion models \citep{ho2020denoising,songscore} generate samples by modeling a {\em forward process} that gradually adds %Gaussian 
noise to data and a {\em reverse process} that iteratively removes this noise to recover data samples. {\em Diffusion policy} methods %~\footnote{Here, we introduce diffusion policy under the framework of Denoising Diffusion Probabilistic Models (DDPM)~\citep{ho2020denoising} because as of the time of this paper, almost all diffusion policy methods are built upon it. }~\citep{reuss2023goal,chi2025diffusion} 
use this idea for imitating expert behavior, where a policy $\pi(\cdot | \mathbf{s}_t)$ is modeled as an 
action generation process: %More formally, given the current state $\mathbf{s}_t$, 
given an MDP time step $t$, let $\{\mathbf{a}_t^k\}_{k=0}^K$ denote the latent action sequence indexed by the diffusion step $k$, where $\mathbf{a}_t^0$ corresponds to the initial action (samples from expert demonstrations) and larger $k$ corresponds to noisier ones.  
The forward process uses a variance schedule $\{\beta_k\}_{k=1}^K$ with $\alpha_k \coloneqq 1-\beta_k$, $\bar{\alpha}_k \coloneqq \prod_{j=1}^{k}\alpha_j$, and Gaussian noise  $\boldsymbol{\epsilon}\sim\mathcal{N}(0,I)$; the noisy action at step $k$ can be written in closed form as
\begin{equation}
\label{eq:diff_add_noise}
\mathbf{a}_t^k=\sqrt{\bar{\alpha}_k}\,\mathbf{a}_t^0+\sqrt{1-\bar{\alpha}_k}\,\boldsymbol{\epsilon}.
\end{equation} 
The reverse process trains a noise predictor $\boldsymbol{\epsilon}_{\bm \phi}$ by minimizing the following MSE loss: 
\begin{equation}
\label{eq:diff_policy_loss}
\mathcal{L}_{\mathrm{diff}}({\bm \phi})
\coloneqq
\mathbb{E}_{\substack{(\mathbf{s}_t,\mathbf{a}_t^0)\sim \rho^{\pi_E},\\
k\sim \mathrm{Unif}\{1,\dots,K\},\,\boldsymbol{\epsilon}\sim \mathcal{N}(0,I)}}
\left\|
\boldsymbol{\epsilon}
-
\boldsymbol{\epsilon}_{\bm \phi} \left( \mathbf{a}_t^k,
\mathbf{s}_t,
k
\right)
\right\|_2^2 .
\end{equation}
Training the noise predictor $\boldsymbol{\epsilon}_\phi$ can be interpreted as learning $\nabla_{\mathbf{a}}\log \pi_E(\mathbf{a} | \mathbf{s}_t)$, the so-called {\em score} of a diffusion policy, which is a vector field indicating the direction to adjust a noisy action to increase its likelihood under the expert action distribution. 
In the idealized limit of exact denoising, it can be shown that $\nabla_{\mathbf{a}}\log \pi_E(\mathbf{a} | \mathbf{s}_t) |_{\mathbf{a} = \mathbf{a}_t^k} = -\frac{1}{\sqrt{1-\bar{\alpha}_k}}
\boldsymbol{\epsilon}_{\bm \phi}(\mathbf{a}_t^k,\mathbf{s}_t,k) \eqqcolon g_{\bm \phi}(\mathbf{a}_t^k,\mathbf{s}_t,k)$.  
Combining this with Eq.~\eqref{eq:q_derivative}, under the condition that the expert policy is soft-optimal, i.e., $\pi_E=\pi^*$, %and the denoising is exact, 
we have
\begin{equation}
\label{eq:diffusion_q_connection}
%g_\phi(\mathbf{a}_t^k,\mathbf{s}_t,k)= {\textstyle \frac{1}{\varepsilon}}\nabla_{\mathbf{a}} Q_r^*(\mathbf{s}_t,\mathbf{a}) |_{\mathbf{a} = \mathbf{a}_t^k}.
g_\phi(\mathbf{a}_t^k,\mathbf{s}_t,k)
=
\nabla_{\mathbf{a}} \log \pi_E(\mathbf{a} | \mathbf{s}_t)
=
\nabla_{\mathbf{a}} \log \mu(\mathbf{a}|\mathbf{s}_t)|_{\mathbf{a}=\mathbf{a}_t^k}
+
{\textstyle \frac{1}{\varepsilon}}\nabla_{\mathbf{a}} Q_r^*(\mathbf{s}_t,\mathbf{a})|_{\mathbf{a}=\mathbf{a}_t^k}.
\end{equation}
Thus, a pretrained diffusion policy can provide the action-gradient supervision signal to recover the optimal soft Q function. We will utilize this property to design our method. To stay self-contained, detailed derivations of the identity relationship in Eq.~(\ref{eq:diffusion_q_connection}) are given in Appendix~\ref{app:diffusion_score_q}.

\section{Methods}\label{sec:methods}
In this section, we present \textbf{Loop-Free Inverse Reinforcement Learning (LFIRL)}, a fully offline IRL algorithm that avoids reward-policy loops in conventional IRL approaches. Instead, it proceeds in three sequential stages (see Fig.~\ref{fig:irl_comparison}): \textbf{(I)} first recover a soft Q function from diffusion-policy using Sobolev training  and recover the value function $V$ from the learned $Q$ in a way inspired by Gumbel regression; \textbf{(II)} then infer the state-dependent offset and calibrate the $Q$ and $V$; \textbf{(III)} finally recover the reward from the learned $Q$ and $V$ by enforcing Bellman consistency. 

\begin{figure}[t]
    \centering
    \includegraphics[width=0.8\linewidth]{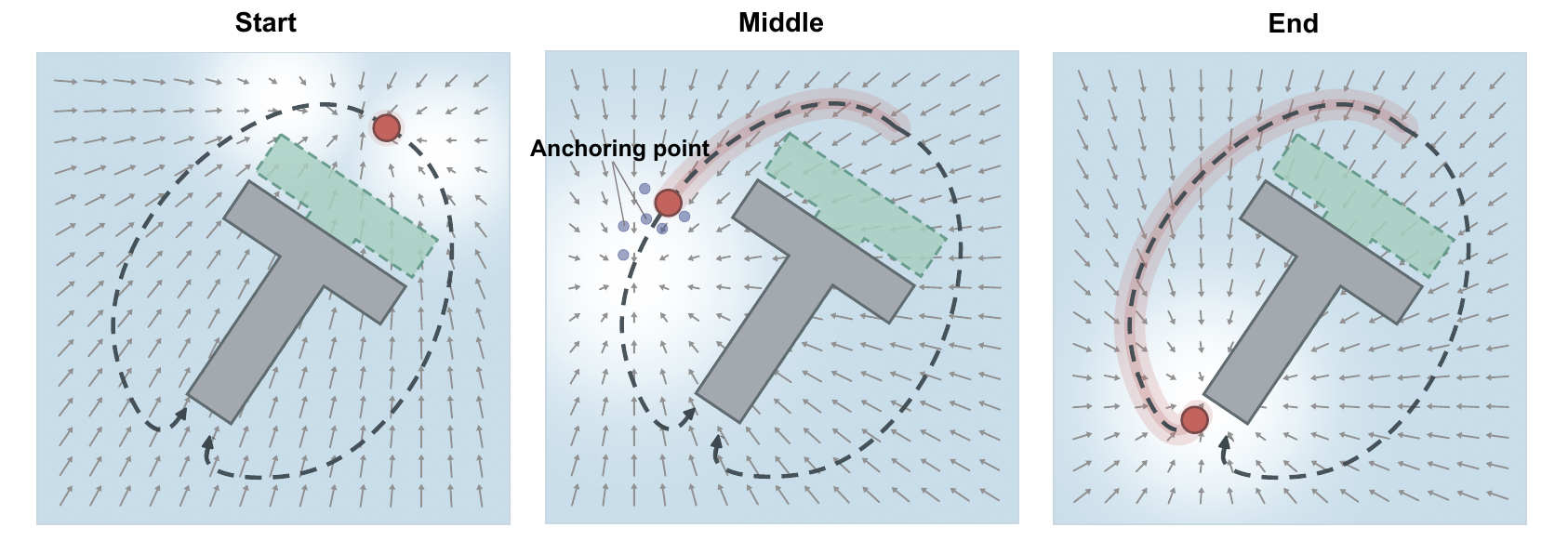}
    %\vspace{-1em}
    \caption{\small  Illustration of matching policy's score on Push-T task, where the goal is to push the T-shaped block to the {\mygreen target pose}. The {\myred red circle} marks the agent's current position (initially in the Start panel), and the dark dashed curves indicate two optimal routes to reach the correct position for pushing the block. The arrows depict the action-gradient field supervised by the  diffusion policy's score; the {\myblue blue dots} are perturbed actions whose Q values are constrained below that of the expert action by a margin, thereby providing anchoring information.} 
    %diffusion policy's score supervise the action-gradient of $\hat Q$ (gray arrows), while perturbed actions (pink anchoring points) help shape the local values of expert actions.} 
    \label{fig:gradient}
    %\vspace{-2em}
\end{figure}

\subsection{Stage I: Recover the Soft Q Function and Value function}\label{sec:qv} 
In this stage, we recover the optimal soft Q function through {\em Sobolev Training} \citep{czarnecki2017sobolev, son2021sobolev}, which trains neural networks by jointly matching function values and their derivatives. 
In our formulation, the score function of the diffusion policy provides direct supervision on action-gradients. 
Specifically, we train a Q-network $\hat{Q}_{\bm{\omega}}(\mathbf{s}, \mathbf{a})$ such that its action-gradient matches the diffusion policy's score after correcting for the reference policy.
Concretely, we align 
$\nabla_{\mathbf{a}} \hat{Q}_{\bm{\omega}}(\mathbf{s}, \mathbf{a})|_{\mathbf{a} = \mathbf{a}^k}$ 
with 
$g_{\bm{\phi}}(\mathbf{a}^k, \mathbf{s}, k) - \nabla_{\mathbf{a}}\log \mu(\mathbf{a}|\mathbf{s})|_{\mathbf{a}=\mathbf{a}^k}$, 
where $\mathbf{a}^k$ is obtained by adding noise to an expert action $\mathbf{a}^0$ via Eq.~\eqref{eq:diff_add_noise}. 

However, the derivation of value supervision is not straightforward, as the optimal Q values themselves are indeed what we seek for. To address this, we introduce an implicit form of value supervision by enforcing a small margin between the Q values of expert actions and those of their {\em locally} perturbed counterparts. We do not impose this margin globally, in order to preserve action diversity, as multiple actions may be optimal under the same state. Intuitively, this margin-based constraint effectively regularizes the Q function, using expert actions as ``anchor points'' to shape a more precise function surface. These settings, illustrated with Push-T task in Fig.~\ref{fig:gradient}, lead to the following loss function for training $\hat{Q}_{\bm{\omega}}(\mathbf{s}, \mathbf{a})$:

\begin{equation}
\label{eq:q_stage1_loss}
\begin{aligned}
\mathcal{L}_{\hat Q}({\bm\omega})
:=
&\;
\mathbb{E}_{\substack{\mathbf{s} \in \mathcal{S},\mathbf{a} \in \mathcal{A},\\k\sim \mathrm{Unif}\{1,\dots,K\}}}
\underbrace{\left[
\left\|
g_\phi(\mathbf{a}^k,\mathbf{s},k)
-
\nabla_\mathbf{a}\log \mu(\mathbf{a}|\mathbf{s}_t)|_{\mathbf{a}=\mathbf{a}_t^k}
-
{\textstyle \frac{1}{\varepsilon}} \nabla_{\mathbf{a}} \hat Q_{\bm\omega}(\mathbf{s},\mathbf{a})|_{\mathbf{a}=\mathbf{a}^k}
\right\|_2^2
\right]}_{\text{Gradient Matching}} \\
&\;+\lambda\,
\mathbb{E}_{\substack{(\mathbf{s},\mathbf{a})\sim \mathcal{D}_E,\\
\tilde{\mathbf{a}}\sim \eta(\cdot \mid \mathbf{a})}}
\underbrace{\left[
\operatorname{ReLU}\left(
\xi
-
\big(
\hat Q_{\bm\omega}(\mathbf{s},\mathbf{a})
-
\hat Q_{\bm\omega}(\mathbf{s},\tilde{\mathbf{a}})
\big)
\right)
\right]}_{\text{Value Anchoring}},
\end{aligned}
\end{equation}
where $\xi>0$ is an intended margin, $\eta (\cdot|\mathbf{a})$ denotes the  neighborhood of action $\mathbf{a}$, 
$\mathrm{ReLU}(x) \coloneqq \max\{0, x\}$ denotes the Rectified Linear Unit function,
and $\lambda\ge 0$ is a weighting coefficient. In practice, we can instantiate the reference policy $\mu(\mathbf{a}|\mathbf{s})$ 
as a simple distribution (e.g., uniform over bounded actions or a Gaussian prior), 
whose score $\nabla_{\mathbf{a}} \log \mu(\mathbf{a}|\mathbf{s})$ is tractable.

After obtaining $\hat Q_{\bm\omega}$, we estimate the corresponding soft value function. However, direct computation of $\hat{V}(\mathbf{s})=
\varepsilon \log \int_\mathcal{A}
\exp(\hat Q_{\bm\omega}(\mathbf{s},\mathbf{a})/\varepsilon)
\, d \mu(\mathbf{a}|\mathbf{s}) $
is generally intractable in continuous action spaces. 
To address this, we reformulate the problem via an equivalent moment condition, drawing inspirations from  {\em Gumbel regression} in RL \citep{gargextreme}, which models action selection by perturbing  Q values with Gumbel distribution noise \citep{hazan2012partition} and leverages {\em Gumbel-Max Trick} \citep{hazan2012partition} to infer the value function as the expected stochastic maximum, yielding a LogSumExp form.
Specifically, the soft value $\hat V(\mathbf{s})$ is the unique solution to 
$
\mathbb{E}_{\mathbf{a}\sim\mu(\cdot|\mathbf{s})}
[
\exp(\frac{\hat Q_{\bm\omega}(\mathbf{s},\mathbf{a}) - V(\mathbf{s})}{\varepsilon})
]
= 1.
$
This characterization allows us to estimate $\hat V(\mathbf{s})$ without explicitly computing the LogSumExp. 
We then construct a convex surrogate whose first-order condition recovers the above moment constraint:
\begin{equation}
\label{eq:v_pointwise_loss_refined}
\ell_V(v;\mathbf{s})
\coloneqq
\mathbb{E}_{\mathbf{a}\sim\mu(\cdot|\mathbf{s})}
\left[
\exp \left(\frac{\hat Q_{\bm\omega}(\mathbf{s},\mathbf{a})-v}{\varepsilon}\right)
-
\frac{\hat Q_{\bm\omega}(\mathbf{s},\mathbf{a})-v}{\varepsilon}
- 1
\right].
\end{equation}
This objective is strictly convex in $v$ and admits a unique minimizer, which coincides with the soft value $\hat V(\mathbf{s})$. The derivation is provided in Appendix~\ref{app:q_recovery_derivations}. 
In practice, we freeze $\hat Q_{\bm\omega}$ and train a value network $\hat V_{\bm\chi}$ on a set of trajectories $\mathcal D_S$ sampled from $\mu$ through the following empirical objective
\begin{equation}
\label{eq:v_loss_main_refined}
\mathcal{L}_{\hat V}({\bm\chi})
\coloneqq
\mathbb{E}_{(\mathbf{s},\mathbf{a})\sim D_S}
\left[
\exp(z)-z-1
\right],
\quad
z =
\frac{\hat Q_{\bm\omega}(\mathbf{s},\mathbf{a})-\hat V_{\bm\chi}(\mathbf{s})}{\varepsilon}.
\end{equation}
Notably, although this objective coincides with the Gumbel regression loss \citep{gargextreme}, our derivation does not rely on 
a stochastic model for Q values, as $\hat Q_{\bm \omega}$ is fixed during training $\hat V_{\bm \chi}$; instead, it follows directly from the deterministic characterization of 
the LogSumExp under the reference policy $\mu$.

\subsection{Stage II: Infer the State-Dependent Offset and Calibrate Soft Values}
\label{sec:offset}

The soft values recovered in Stage I are {\em uncalibrated}, as the action-gradient matching determines only how soft values change wrt actions, but does not reflect their state-dependent factors. This fact can be formally explained by the following proposition, whose justification is given in Appendix~\ref{app:offset_v_derivations}.

\begin{proposition}
\label{prop:q_up_to_b}
Assume $\pi_E=\pi^*$ and that the learned diffusion policy's score is exact in the sense of Eq.~\eqref{eq:diffusion_q_connection}. For each state $\mathbf{s}$, if the action-gradient matching term in Eq.~\eqref{eq:q_stage1_loss} is minimized exactly, then there exists a state-dependent function $b^*:\mathcal{S}\to\mathbb{R}$ such that $Q^*(\mathbf{s},\mathbf{a}) = \hat Q_{\bm\omega}(\mathbf{s},\mathbf{a}) + b^*(\mathbf{s})$.
\end{proposition}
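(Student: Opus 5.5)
The plan is to show that exact minimization of the gradient-matching term forces the action-gradients of $\hat Q_{\bm\omega}$ and $Q^*$ to coincide. A function whose action-gradient vanishes identically on the action space is then constant in $\mathbf{a}$, which gives the state-dependent offset. We fix an arbitrary state $\mathbf{s}$ and work pointwise in $\mathbf{s}$ throughout.

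\textbf{Step 1: the gradients coincide.} Under the assumptions $\pi_E=\pi^*$ and exactness of the score in the sense of Eq.~\eqref{eq:diffusion_q_connection}, substitute $g_{\bm\phi}(\mathbf{a}^k,\mathbf{s},k)-\nabla_{\mathbf{a}}\log\mu(\mathbf{a}|\mathbf{s})|_{\mathbf{a}=\mathbf{a}^k}=\frac{1}{\varepsilon}\nabla_{\mathbf{a}}Q^*(\mathbf{s},\mathbf{a})|_{\mathbf{a}=\mathbf{a}^k}$ into the gradient-matching term of Eq.~\eqref{eq:q_stage1_loss}. The integrand then becomes
\[
\varepsilon^{-2}\big\|\nabla_{\mathbf{a}}Q^*(\mathbf{s},\mathbf{a}^k)-\nabla_{\mathbf{a}}\hat Q_{\bm\omega}(\mathbf{s},\mathbf{a}^k)\big\|_2^2 .
\]
This is nonnegative, and its infimum is $0$, since $\hat Q=Q^*$ attains it. Exact minimization therefore means the expectation is zero. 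Hence the integrand vanishes almost everywhere with respect to the sampling distribution of $\mathbf{a}^k$.

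\textbf{Step 2: upgrade to everywhere.} The noisy action $\mathbf{a}^k=\sqrt{\bar\alpha_k}\mathbf{a}^0+\sqrt{1-\bar\alpha_k}\boldsymbol\epsilon$ has a Gaussian component. For $k\ge1$ its law therefore has a strictly positive density on all of $\mathcal{A}$ (or on the interior of a bounded action set, if one is used). Assuming $\nabla_{\mathbf{a}}\hat Q_{\bm\omega}$ and $\nabla_{\mathbf{a}}Q^*$ are continuous in $\mathbf{a}$, as for smooth networks and a smooth $Q^*$, a continuous function vanishing a.e. under a full-support measure vanishes everywhere. We conclude $\nabla_{\mathbf{a}}\big(Q^*(\mathbf{s},\mathbf{a})-\hat Q_{\bm\omega}(\mathbf{s},\mathbf{a})\big)=0$ for all $\mathbf{a}$.

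\textbf{Step 3: constant in the action.} Let $h_{\mathbf{s}}(\mathbf{a}) := Q^*(\mathbf{s},\mathbf{a})-\hat Q_{\bm\omega}(\mathbf{s},\mathbf{a})$. Provided $\mathcal{A}$ is connected, for instance convex like $\mathbb{R}^d$ or a box, the mean value theorem along the segment joining any two actions gives $h_{\mathbf{s}}(\mathbf{a})=h_{\mathbf{s}}(\mathbf{a}')$. Defining $b^*(\mathbf{s}) := h_{\mathbf{s}}(\mathbf{a}_0)$ for any fixed $\mathbf{a}_0$ then yields $Q^*=\hat Q_{\bm\omega}+b^*$.

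\textbf{Main obstacle.} The delicate step is Step 2, passing from zero expected loss to pointwise equality. The state distribution also matters here: the statement is phrased ``for each state $\mathbf{s}$'', so I would read exact minimization as holding conditionally on each $\mathbf{s}$, rather than only almost surely over states. I would state the needed regularity explicitly: continuity of the gradients, full support of the noised action distribution, and connectedness of $\mathcal{A}$. The value-anchoring term plays no role, since it does not affect the gradient-matching minimizer assumed here.
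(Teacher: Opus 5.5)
Your proof is correct and follows the paper's route: substitute the exact-score identity to get $\nabla_{\mathbf{a}}\hat Q_{\bm\omega}=\nabla_{\mathbf{a}}Q^*$, then conclude that the difference does not depend on the action. Yours is more careful than the paper's, which only fixes a connected action neighborhood, evaluates at the clean action rather than at $\mathbf{a}^k$, and proves the identity just on that neighborhood; you instead make explicit the almost-everywhere-to-everywhere upgrade (full-support Gaussian noise plus continuity) and the connectedness of $\mathcal{A}$ needed for a global offset.
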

This reflects the ambiguity caused by \emph{shift invariance}: adding any {\em state-dependent offset} $b(\mathbf{s})$ leaves action-gradient unchanged, i.e., $\nabla_{\mathbf{a}} \hat Q_{\bm\omega}(\mathbf{s},\mathbf{a})=\nabla_{\mathbf{a}} \big(\hat Q_{\bm\omega}(\mathbf{s},\mathbf{a}) + b(\mathbf{s})\big)$, 
making $\hat Q_{\bm\omega}$ identifiable only up to $b(\mathbf{s})$. 
We therefore introduce an offset network $b_{\bm\psi}:\mathcal{S}\to\mathbb{R}$ and define the calibrated soft Q function as $Q_{{\bm\omega},{\bm\psi}}(\mathbf{s},\mathbf{a})
\coloneqq
\hat Q_{\bm\omega}(\mathbf{s},\mathbf{a})
+
b_{\bm\psi}(\mathbf{s})$ and write the calibrated soft value function from Stage I as $\hat V_{\bm\chi}(\mathbf{s})+b_{\bm\psi}(\mathbf{s})$. 
A key challenge is that the offset $b_{\bm\psi}$ is not identifiable from action-gradient information. While penalizing $b_{\bm\psi}^2$ can fix the shift ambiguity by selecting the minimum-norm solution among all equivalent offsets, it ignores the transition structure and may yield solutions that violate Bellman consistency. Therefore, we regularize the {\em Bellman-implied reward}, 
\begin{equation}
\delta_{{\bm\omega},{\bm\chi},{\bm\psi}}(\mathbf{s}_t,\mathbf{a}_t,\mathbf{s}_{t+1})
\coloneqq
\hat Q_{\bm\omega}(\mathbf{s}_t,\mathbf{a}_t)
+
b_{\bm\psi}(\mathbf{s}_t)
-
\gamma
\big(
\hat V_{\bm\chi}(\mathbf{s}_{t+1})
+
b_{\bm\psi}(\mathbf{s}_{t+1})
\big),
\end{equation}
which couples $b_{\bm\psi}$ across consecutive states and enforces consistency with the underlying dynamics. Intuitively, minimizing the squared norm of $\delta_{{\bm\omega},{\bm\chi},{\bm\psi}}$ penalizes inconsistent offset value shifts across transitions, encouraging the offset to vary coherently along trajectories. %This transforms offset inference into a structured inverse problem, where TD consistency provides the governing constraint, and a mild $\ell_2$ penalty on $b_{\bm\psi}$ fixes the remaining degree of freedom. 

In addition, absorbing states provide direct supervision: for any $\mathbf{s}_\bot \in \mathcal{S}_\bot$, we have $Q^*(\mathbf{s}_\bot,\mathbf{a})=0$ for all $\mathbf{a}$.~\footnote{This zero value is caused by the absorbing state itself, not by a particular action, so it is used to calibrate the state-dependent offset rather than to provide the action-level value supervision used in Stage I.} Thus, the resulting objective is
%Therefore, we use $\delta_{{\bm\omega},{\bm\chi},{\bm\psi}}$ to constrain the difference between the calibrated current $Q$-value and the discounted offset-corrected next-state value, while the quadratic penalty prevents the offset from becoming arbitrarily large. In addition, absorbing states provide direct supervision for $Q$: if $\mathbf{s}_\bot$ is an absorbing state, then $Q^*(\mathbf{s}_\bot,\mathbf{a})=0$ for any valid action $\mathbf{a}$.~\footnote{This zero value is determined by the absorbing state itself, rather than by the choice of action. Therefore, it cannot serve as the action-level value supervision in Stage I.} 
%The resulting loss function is 
\begin{equation}
\label{eq:b_loss_main}
\begin{aligned}
\mathcal{L}_{b}({\bm\psi})
\coloneqq
\mathbb{E}_{\substack{(\mathbf{s}_t,\mathbf{a}_t,\mathbf{s}_{t+1})\\\sim\mathcal D_E \cup \mathcal{D}_S}}
[
\underbrace{
\delta^2_{{\bm\omega},{\bm\chi},{\bm\psi}}
+
\lambda_b\, b_{\bm\psi}^2(\mathbf{s}_t)
}_{\text{Offset regularization}}
]
+
&\;
\mathbb{E}_{\mathbf{s}_\bot \in \mathcal{S}_\bot,\mathbf{a}}
[
\underbrace{
\big(
\hat Q_{\bm\omega}(\mathbf{s}_\bot,\mathbf{a})
+
b_{\bm\psi}(\mathbf{s}_\bot)
\big)^2
}_{\text{Absorbing states supervision}}],
\end{aligned}
\end{equation}
where $\lambda_b\ge 0$ is a regularization coefficient, $\hat Q_{\bm\omega}$ and $\hat V_{\bm\chi}$ are fixed, and only $b_{\bm\psi}$ is updated.

After calibrating the soft Q function, we re-estimate the value function using the calibrated $Q_{{\bm\omega},{\bm\psi}}$. Specifically, we freeze $Q_{{\bm\omega},{\bm\psi}}$ and optimize the Gumbel regression objective again:
\begin{equation}
\label{eq:v_loss_calibrated}
\mathcal{L}_{V}({\bm\chi}')
\coloneqq
\mathbb{E}_{\mathbf{s},\mathbf{a}\sim\mu(\cdot|\mathbf{s})}
\left[
\exp(\bar z)-\bar z-1
\right],
\qquad
\bar z
\coloneqq
\frac{
Q_{{\bm\omega},{\bm\psi}}(\mathbf{s}_t,\mathbf{a}_t)
-
V_{\bm\chi'}(\mathbf{s}_t)
}{\varepsilon}.
\end{equation}
This second value-fitting step is needed because the soft value function must be consistent with the offset-calibrated soft Q function. 
%The resulting pair $Q_{{\bm\omega},{\bm\psi}}$ and $V_{\bm\chi'}$ is then used for reward recovery in Stage III.

\subsection{Stage III: Recover the Reward Function}\label{sec:r}

Given the calibrated soft Q function $Q_{{\bm\omega},{\bm\psi}}$ and value function $V_{\bm\chi'}$, reward recovery reduces to enforcing Bellman consistency. For a transition $(\mathbf{s}_t,\mathbf{a}_t,\mathbf{s}_{t+1})$, we write the Bellman-implied reward target
$
r=Q_{{\bm\omega},{\bm\psi}}(\mathbf{s}_t,\mathbf{a}_t)-\gamma V_{\bm\chi'}(\mathbf{s}_{t+1}).
$ 
We parameterize the reward by a network $r_{\bm\theta}$ and learn it by minimizing Bellman residual errors. To stabilize training, we apply clipping and normalization:
\begin{equation}
\tilde r
=
\frac{\operatorname{clip}(r,-c_r,c_r)-m_r}{\sigma_r+\zeta_r},
\qquad
\tilde r_{\bm\theta}
=
\frac{r_{\bm\theta}(\mathbf{s}_t,\mathbf{a}_t)-m_r}{\sigma_r+\zeta_r},
\end{equation}
where $c_r>0$ is the clipping threshold, $(m_r,\sigma_r)$ are the running mean and standard deviation of the clipped reward targets, and $\zeta_r>0$ is a small numerical constant.
The reward network is trained via
\begin{equation}
\label{eq:r_loss_main}
\mathcal{L}_{r}({\bm\theta})
=
\mathbb{E}_{(\mathbf{s}_t,\mathbf{a}_t,\mathbf{s}_{t+1})\sim\mathcal D_E \cup \mathcal D_S}
\left[
( \tilde r_{\bm\theta} - \tilde r )^2
\right].
\end{equation}

\begin{algorithm}[t]
    \caption{Loop-Free Inverse Reinforcement Learning (LFIRL)}
    \label{alg:loop_free_irl}
    %\small
    \begin{algorithmic}[1]
       \STATE {\bfseries Input:} Expert demonstrations $\mathcal{D}_E$; initialized networks $\hat Q_{\bm\omega}$, $b_{\bm\psi}$, $V_{\bm\chi}$, and $r_{\bm\theta}$; reference policy $\mu$.%discount factor $\gamma$.
       %\STATE Construct an expert transition buffer $\mathcal{B}_E$ from $\mathcal{D}_E$ and a .
       \STATE {\mygreen Train a (or load a pre-trained) diffusion policy on $\mathcal{D}_E$. \COMMENT{\textbf{Stage I}}}
       \STATE {\mygreen Recover $\hat Q_{\bm\omega}(\mathbf{s}, \mathbf{a})$ on $\mathcal{D}_E$ via matching diffusion policy's score. }\hfill$\rhd$ Eq.~\eqref{eq:q_stage1_loss} 
       \STATE {\mygreen Recover $V_{\bm\chi}(\mathbf{s})$ on $\mathcal{D}_S$ using Gumbel-regression-style value fitting.} \hfill $\rhd$ Eq.~\eqref{eq:v_loss_main_refined}
       \STATE {\myorange Recover state-dependent offset $b_{\bm\psi}(\mathbf{s})$ on $\mathcal{D}_E$ and $\mathcal{D}_S$. \COMMENT{\textbf{Stage II}}} \hfill $\rhd$ Eq.~\eqref{eq:b_loss_main}
       \STATE {\myorange Calibrate $\hat V_{\bm\chi'}(\mathbf{s})$ by the calibrated Q values $\hat Q_{\bm\omega} + b_{\bm\psi}(\mathbf{s})$.} \hfill $\rhd$ Eq.~\eqref{eq:v_loss_calibrated}
       \STATE {\myblue Recover $r_{\bm\theta}$ on $\mathcal{D}_E$ and $\mathcal{D}_S$ by minimizing Bellman residual errors. \COMMENT{\textbf{Stage III}}} \hfill $\rhd$ Eq.~\eqref{eq:r_loss_main}
       \STATE {\bfseries Output:} recovered reward function $r_{\bm\theta}$.
    \end{algorithmic}
\end{algorithm}

\subsection{Algorithm Summary}
The complete procedure of the proposed Loop-Free IRL framework is summarized in Algorithm~\ref{alg:loop_free_irl}. 
In contrast to traditional IRL pipelines, our method eliminates the reward-policy loop entirely: each stage is executed only once in a sequential manner, and every component is trained solely on the frozen outputs produced by preceding stages. Importantly, the algorithm is {\em fully offline}, as $\mathcal{D}_S$ is an offline dataset collected from the reference policy $\mu$. Consequently, training requires neither environment interactions nor rollouts from a learned dynamics model during training. Overall, this loop-free offline design substantially improves training efficiency, as will be demonstrated experimentally.

\section{Experiments}\label{sec:experiment}

We evaluate LFIRL from two perspectives: {\bf (1)} reward recovery quality and {\bf (2)} training time efficiency.

\subsection{Experimental Setup}

We consider four different kinds of tasks (see Fig.~\ref{fig:tasks}): PointMaze with the UMaze, Medium, and Large settings; Franka Kitchen consists of four subtasks; Adroit Hand Pen; and the Push-T manipulation \citep{chi2025diffusion}. We compare against four classes of baselines: (1) {\em Non-Adversarial IRL} baselines: IQ-Learn \citep{garg2021iq} and ML-IRL \citep{zeng2022maximum}; (2) the {\em Adversarial IRL} baseline: AIRL \citep{fu2018learning}; (3) the {\em Offline IRL} baseline: Offline ML-IRL \citep{zeng2023demonstrations}, CLARE \citep{yue2023clare}, and ValueDICE \citep{kostrikov2019imitation}; and (4) {\em Diffusion-based GAIL} baselines: DRAIL \citep{lai2024diffusion} and DIFO \citep{huang2024diffusion}. All reported results are averaged over five random runs. All methods are trained with the same environment-step budget, and within each environment we use the same number of expert demonstrations for all methods. Full implementation details, including hyperparameters and network architectures, are in Appendix~\ref{app:setup}.

To evaluate reward recovery on PointMaze and Franka Kitchen, we
train an SAC \citep{haarnoja2018soft} policy on the recovered reward, and then evaluate this policy in the original environment. We report success rate on PointMaze and the mean number of completed subtasks on Franka Kitchen. Since our method is designed to recover a reward function rather than directly output a policy, this setting provides a task-level assessment of whether the recovered reward induces the intended behavior. For the higher-dimensional Adroit Hand Pen and Push-T environments, we instead evaluate reward quality through discrimination: we construct balanced sets of high-quality and low-quality trajectories, score them with each learned reward model, and report the resulting classification accuracy. We additionally compare the recovered and ground-truth rewards on the same state-action samples using Pearson's and Spearman's correlation coefficients; the results are presented in Sec.~\ref{sec:direct_reward_recovery}. To evaluate efficiency, we record the training time required by each method to consume the same number of trajectory throughput, with each method run independently and without parallelized competition for compute resources. In addition to the main comparison, we also report an offset-ablation study and a demonstration-reduction study; the corresponding results are provided in Sec.~\ref{sec:ablation} and Sec.~\ref{sec:fewer_demo}.

\subsection{Reward Recovery}

The main results are shown in Fig.~\ref{fig:reward_recovery}. Overall, LFIRL recovers highly effective rewards and exhibits strong cross-task robustness. Across various tasks, the recovered reward consistently supports competitive downstream control and achieves the best or near-best final performance in most settings. Even in cases where it does not attain the single best score, its performance remains highly competitive, indicating that the proposed sequential recovery pipeline yields stable and transferable reward signals.

\begin{figure}[t]
  \centering
  \includegraphics[width=0.155\linewidth]{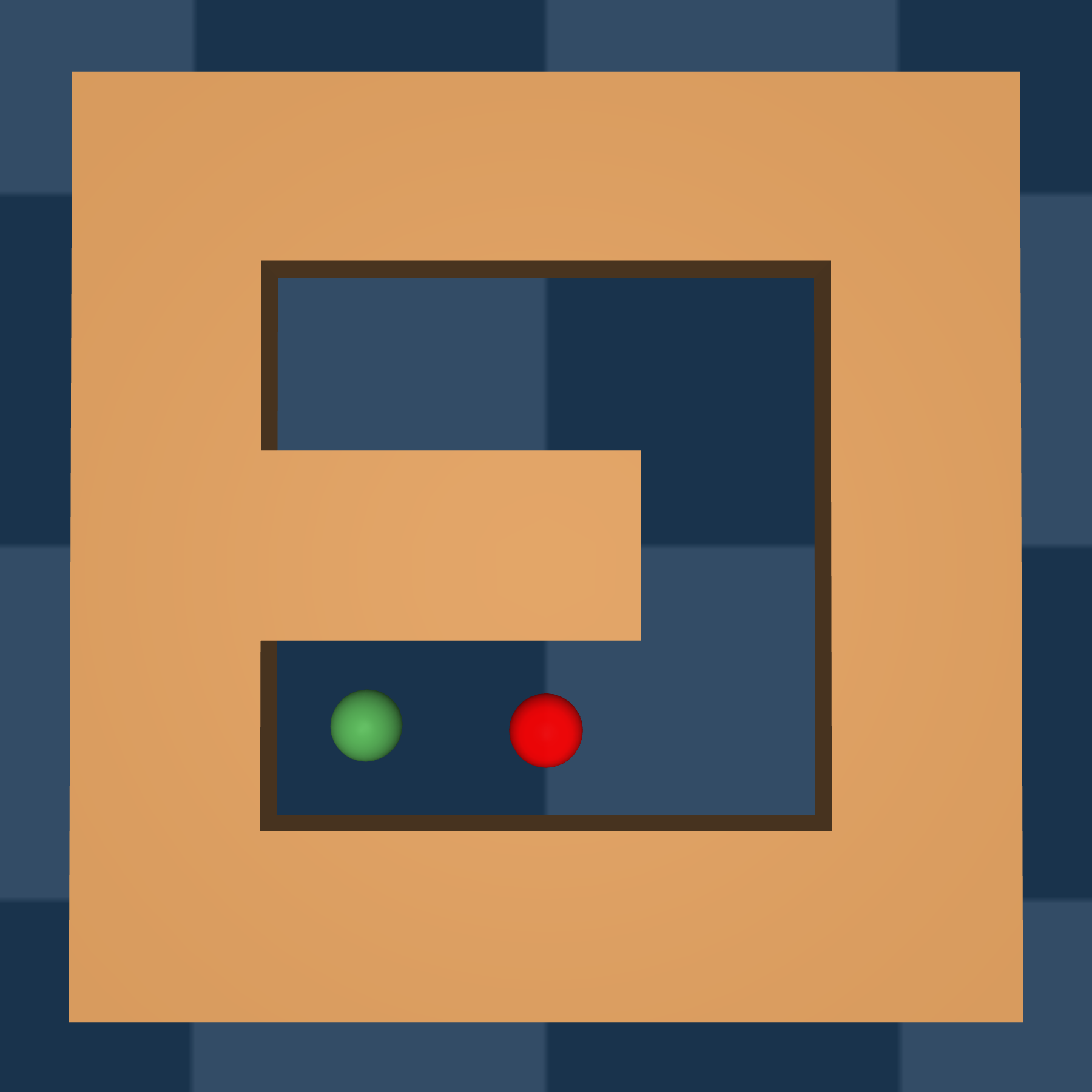}\hfill
  \includegraphics[width=0.155\linewidth]{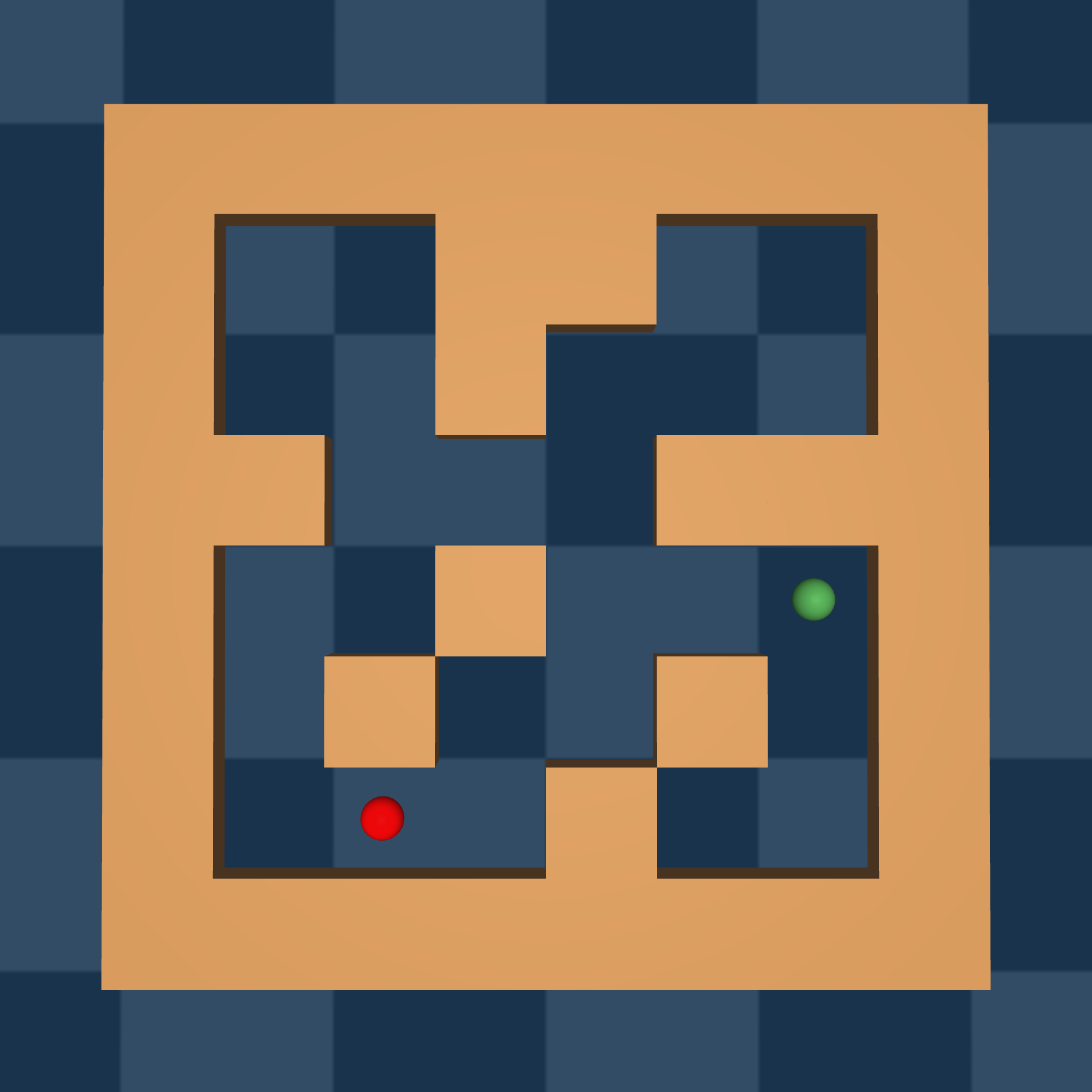}\hfill
  \includegraphics[width=0.155\linewidth]{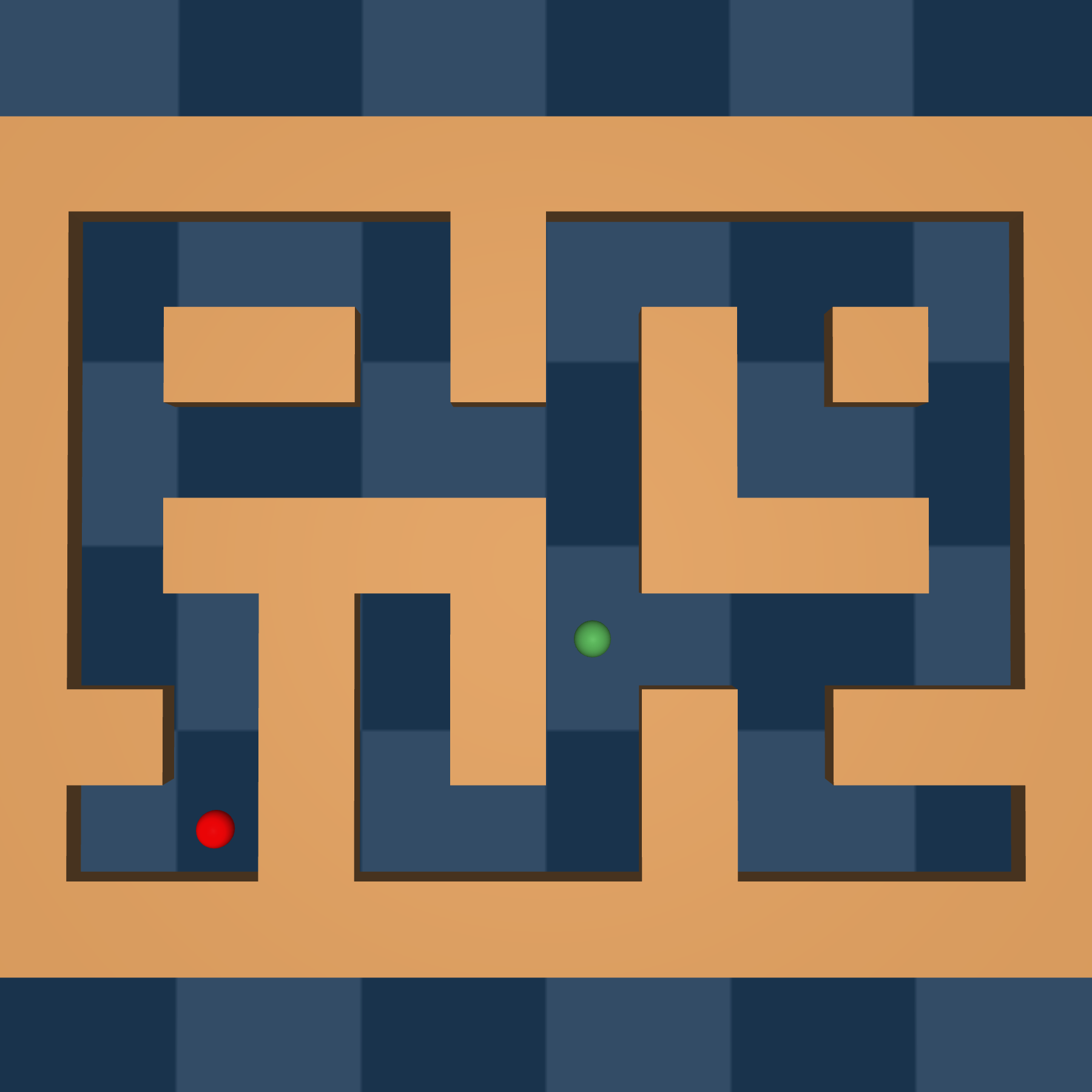}\hfill
  \includegraphics[width=0.155\linewidth]{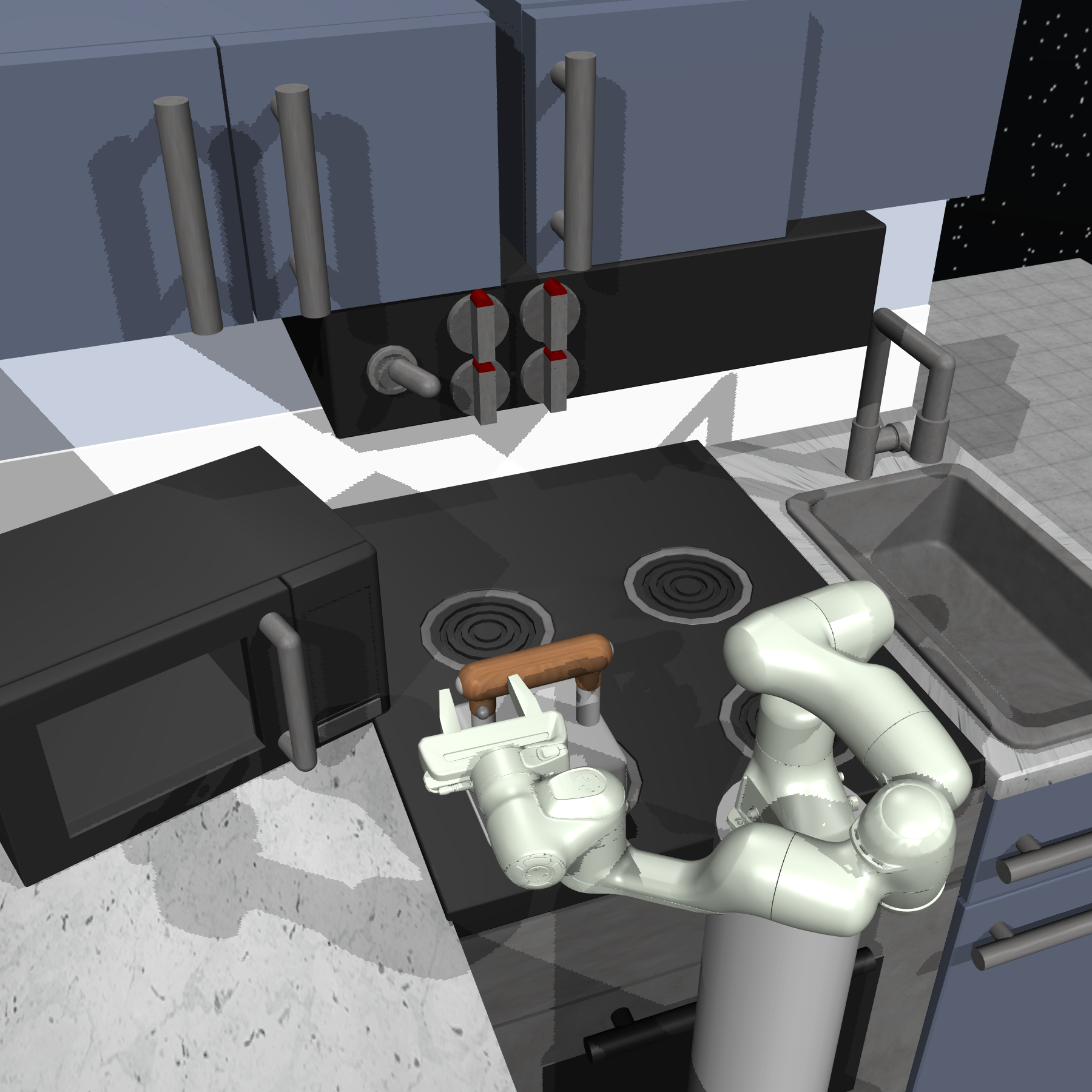}\hfill
  \includegraphics[width=0.155\linewidth]{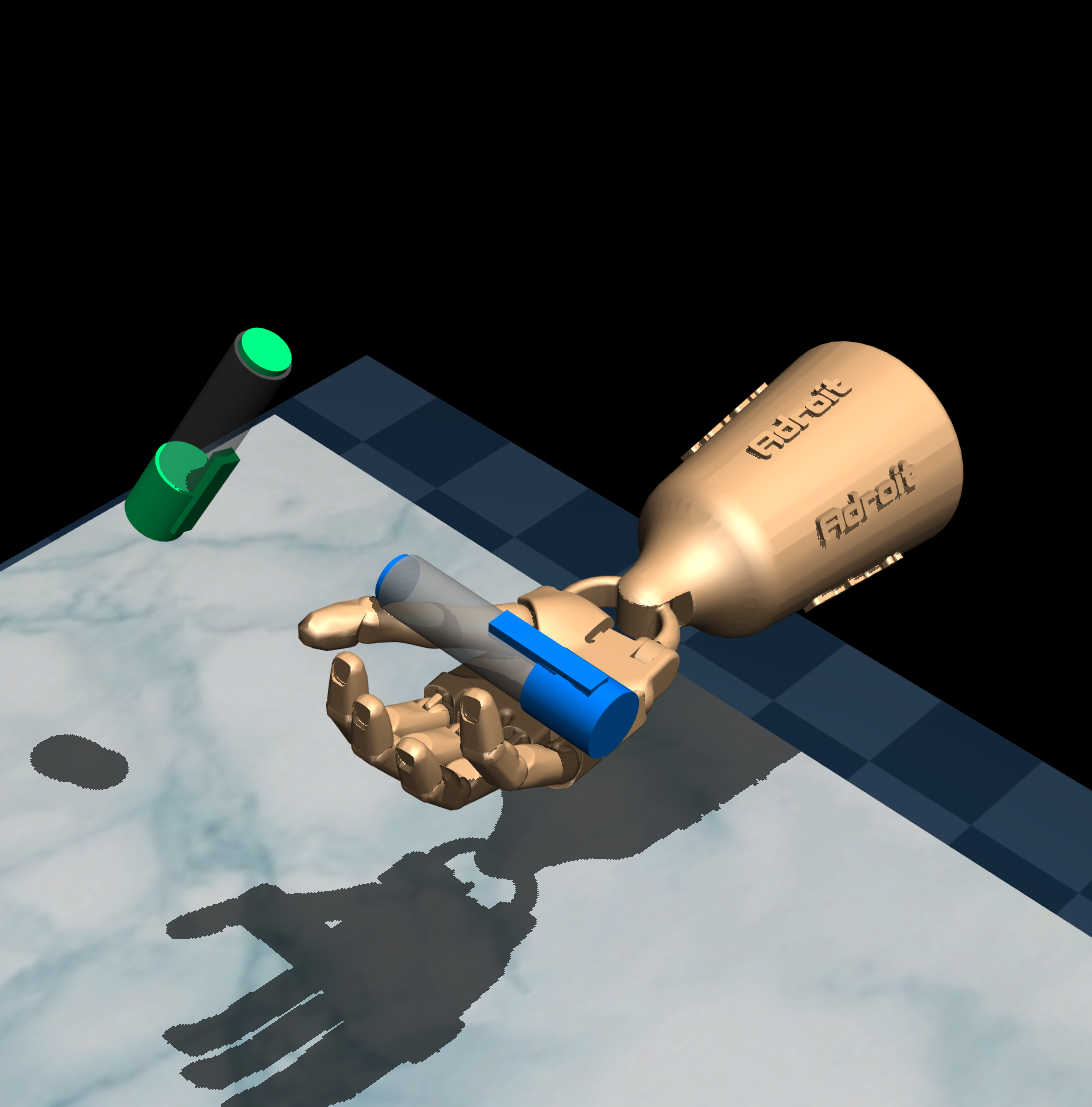}\hfill
  \includegraphics[width=0.155\linewidth]{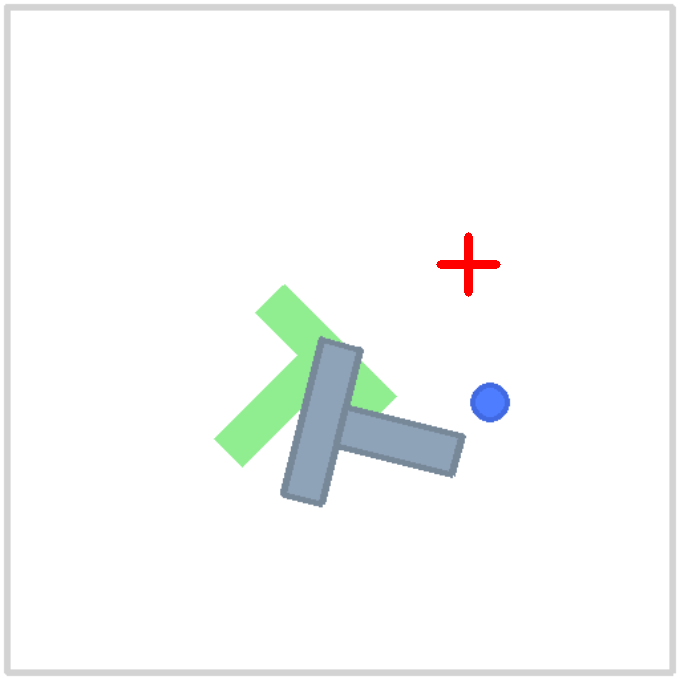}

  \vspace{2pt}

  \parbox[b]{0.155\linewidth}{\centering UMaze}\hfill
  \parbox[b]{0.155\linewidth}{\centering Maze(M)}\hfill
  \parbox[b]{0.155\linewidth}{\centering Maze(L)}\hfill
  \parbox[b]{0.155\linewidth}{\centering Kitchen}\hfill
  \parbox[b]{0.155\linewidth}{\centering Pen}\hfill
  \parbox[b]{0.155\linewidth}{\centering Push-T}

  \caption{\small Illustrations of the evaluation tasks used in our experiments.}
  \label{fig:tasks}
\end{figure}

\begin{figure}[tb]
    \centering
    \includegraphics[width=\columnwidth]{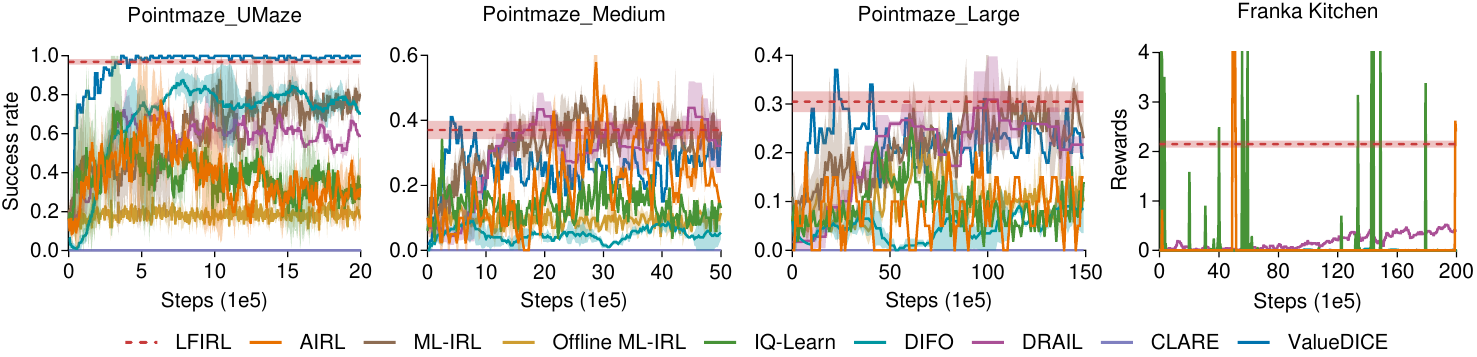}
    \caption{\small
    Reward-recovery results on PointMaze and Franka Kitchen. We report success rate on PointMaze and average completed subtasks on Franka Kitchen. Since LFIRL recovers the reward only at the final stage, its result is shown as a fixed final score rather than a policy-learning curve. For online IRL methods, ``steps'' denotes environment interaction steps. For offline IRL methods, ``steps'' denotes the number of transitions sampled from the offline dataset during training.
    }
    \label{fig:reward_recovery}
\end{figure}

For the higher-dimensional Push-T and Adroit Hand Pen, we instead evaluate whether the recovered reward can distinguish high-quality trajectories from low-quality ones. The results are reported in Tab.~\ref{tab:pusht_recovery}. These results further support a consistent overall picture: the reward recovered by LFIRL captures meaningful behavioral differences and generalizes well across qualitatively different evaluation protocols, including both downstream control and trajectory discrimination.

\begin{table}[tb]
  \centering
  \scriptsize
  \setlength{\tabcolsep}{2.5pt}
  \renewcommand{\arraystretch}{0.95}
  \caption{\small
    Trajectory-quality classification accuracy (\%) on Push-T and Adroit Hand Pen using reward networks recovered by different IRL algorithms. Results are reported as mean $\pm$ standard deviation over five runs.
  }
  \label{tab:pusht_recovery}
  \vspace{1em}
  \begin{tabular}{lccccccccc}
    \toprule
    \textbf{Task}
      & LFIRL
      & AIRL
      & IQ-Learn
      & DIFO
      & DRAIL
      & ML-IRL
      & Offline ML-IRL
      & CLARE
      & ValueDICE \\
    \midrule
    Push-T
      & \textbf{74.17$\pm$1.96}
      & 59.67$\pm$2.08
      & 39.67$\pm$0.58
      & \textbf{79.00$\pm$1.00}
      & 49.67$\pm$1.15
      & 71.33$\pm$2.33
      & 42.67$\pm$1.20
      & 71.00$\pm$3.67
      & \textbf{78.67$\pm$2.52} \\
    Pen
      & \textbf{97.17$\pm$2.75}
      & 90.00$\pm$2.45
      & 74.33$\pm$7.59
      & 91.67$\pm$2.05
      & \textbf{98.33$\pm$0.94}
      & 71.00$\pm$4.32
      & 66.00$\pm$12.43
      & 94.67$\pm$0.58
      & 95.67$\pm$1.15 \\
    \bottomrule
  \end{tabular}
\end{table}

\begin{table}[ht]
  \centering
  \scriptsize
  \setlength{\tabcolsep}{1.5pt}
  \renewcommand{\arraystretch}{1}
  \caption{\small
    Average running time (in hours). ``Load D.P.'' denotes LFIRL with a pre-trained diffusion policy, while ``Train D.P.'' denotes LFIRL including diffusion-policy training time. ``T.R.'' is short for ``time reduction'' and reports the relative time saved compared with the fastest baseline (marked with an underline).
  }
  \label{tab:running_time}
  \vspace{1em}
  \begin{tabular}{l|rrrrr|rrr|rr|rr}
    \toprule
   \multirow{2}{*}{\textbf{Task}}
      & \multicolumn{5}{c|}{\textbf{Online methods}}
      & \multicolumn{3}{c|}{\textbf{Offline methods}}
      & \multicolumn{4}{c}{\textbf{LFIRL}} \\
    \cmidrule(lr){2-6}
    \cmidrule(lr){7-9}
    \cmidrule(lr){10-13}
      & AIRL
      & IQ-Learn
      & DIFO
      & DRAIL
      & ML-IRL
      & ML-IRL
      & CLARE
      & ValueDICE
      & Load D.P.
      & T.R. (SpeedUp)
      & Train D.P.
      & T.R. (SpeedUp) \\
    \midrule
    UMaze
      & \underline{1.38} & 6.88 & 2.47 & 1.70 & 9.76
      & 3.09 & 5.86 & 4.66
      & 0.28 & 79.7\% (5x) & 0.43 & 68.8\% (3x) \\
    Maze(M)
      & 3.67 & 10.10 & 7.05 & \underline{2.65} & 13.80
      & 4.99 & 5.98 & 4.87
      & 0.77 & 70.9\% (3x) & 1.07 & 59.6\% (2x) \\
    Maze(L)
      & 6.88 & 10.88 & 7.73 & \underline{3.05} & 13.33
      & 5.52 & 6.67 & 5.17
      & 1.42 & 53.4\% (2x) & 1.80 & 41.0\% (2x) \\
    Kitchen
      & 9.55 & 13.72 & 7.40 & \underline{3.33} & 18.42
      & 6.83 & 7.80 & 5.91
      & 1.52 & 54.4\% (2x) & 1.87 & 43.8\% (2x) \\
    Push-T
      & \underline{1.57} & 8.55 & 4.05 & 1.95 & 9.50
      & 2.10 & 3.20 & 5.30
      & 0.35 & 77.7\% (4x) & 0.62 & 60.5\% (3x) \\
    Pen
      & 4.83 & 14.88 & 10.83 & 7.50 & 18.52
      & 6.53 & \underline{3.79} & 4.71
      & 1.18 & 68.9\% (3x) & 1.71 & 54.9\% (2x) \\
    \bottomrule
  \end{tabular}
\end{table}

\subsection{Average Running Time}

Tab.~\ref{tab:running_time} reports the average running time under the same step budget. Overall, LFIRL substantially reduces the time cost of training. Across tasks, it is consistently the fastest method, typically achieving about a 2--3x reduction in runtime compared with the fastest baseline in each setting, and a roughly 2--5x reduction when loading a pre-trained diffusion policy. These results confirm that removing the reward-policy optimization loop brings not only pipeline simplicity, but also a substantial practical gain in time efficiency.

\subsection{Correlation with Ground-Truth Rewards}
\label{sec:direct_reward_recovery}

The preceding evaluations assess whether a recovered reward supports the intended behavior or distinguishes trajectories of different quality. We further evaluate reward recovery directly by comparing each learned reward with the environment's ground-truth reward on the same set of state-action samples. We report Pearson's correlation coefficient (PCC), which measures linear association between the recovered and ground-truth reward values, and Spearman's rank correlation coefficient (SCC), which measures agreement between their rankings. A higher PCC indicates that the recovered reward more closely follows the ground-truth reward's linear variation across samples; a higher SCC indicates that it better preserves which samples receive higher or lower reward. These measures assess agreement in structure and ordering, respectively, without requiring the recovered reward to have exactly the same scale or offset as the ground-truth reward.

\begin{table*}[t]
  \centering
  \scriptsize
  \setlength{\tabcolsep}{5pt}
  \renewcommand{\arraystretch}{1.08}
  \caption{\small
    Direct comparison of recovered and ground-truth rewards on the same state-action samples. Each entry reports PCC / SCC; higher values indicate stronger linear association / rank agreement. Bold indicates the highest value for each metric in each task.
  }
  \label{tab:direct_reward_recovery}
  \vspace{0.5em}
  \begin{tabular}{lccccccc}
      \toprule
      \textbf{Method}
        & \textbf{UMaze}
        & \textbf{Medium}
        & \textbf{Large}
        & \textbf{Kitchen}
        & \textbf{Push-T}
        & \textbf{Pen}
        & \textbf{Average} \\
      & \textbf{PCC / SCC}
      & \textbf{PCC / SCC}
      & \textbf{PCC / SCC}
      & \textbf{PCC / SCC}
      & \textbf{PCC / SCC}
      & \textbf{PCC / SCC}
      & \textbf{PCC / SCC} \\
      \midrule
    LFIRL (Ours)
      & \textbf{0.83 / 0.91}
      & \textbf{0.76} / 0.82
      & \textbf{0.84 / 0.87}
      & \textbf{0.67 / 0.81}
      & 0.72 / \textbf{0.81}
      & \textbf{0.79 / 0.85}
      & \textbf{0.77 / 0.85} \\
    \midrule
    AIRL
      & 0.65 / 0.86
      & 0.63 / 0.74
      & 0.77 / 0.75
      & 0.44 / 0.63
      & 0.69 / 0.74
      & 0.75 / 0.71
      & 0.66 / 0.74 \\
    IQ-Learn
      & 0.62 / 0.74
      & 0.59 / 0.62
      & 0.65 / 0.70
      & 0.34 / 0.51
      & 0.48 / 0.65
      & 0.47 / 0.62
      & 0.53 / 0.64 \\
    DIFO
      & 0.69 / 0.79
      & 0.53 / 0.67
      & 0.68 / 0.81
      & 0.61 / 0.77
      & \textbf{0.83} / 0.79
      & 0.71 / 0.72
      & 0.68 / 0.76 \\
    DRAIL
      & 0.67 / 0.71
      & 0.74 / \textbf{0.84}
      & 0.69 / 0.78
      & 0.56 / 0.76
      & 0.51 / 0.56
      & 0.63 / 0.62
      & 0.63 / 0.71 \\
    ML-IRL
      & 0.78 / 0.75
      & 0.67 / 0.72
      & 0.64 / 0.71
      & 0.63 / 0.69
      & 0.66 / 0.73
      & 0.78 / 0.69
      & 0.69 / 0.72 \\
    Offline ML-IRL
      & 0.49 / 0.52
      & 0.47 / 0.58
      & 0.42 / 0.62
      & 0.45 / 0.54
      & 0.45 / 0.53
      & 0.53 / 0.43
      & 0.47 / 0.54 \\
    CLARE
      & 0.72 / 0.80
      & 0.68 / 0.74
      & 0.49 / 0.61
      & 0.24 / 0.46
      & 0.51 / 0.79
      & 0.58 / 0.55
      & 0.54 / 0.66 \\
    ValueDICE
      & 0.72 / 0.81
      & 0.61 / 0.79
      & 0.74 / 0.71
      & 0.53 / 0.56
      & 0.65 / 0.76
      & 0.57 / 0.63
      & 0.64 / 0.71 \\
    \bottomrule
  \end{tabular}
\end{table*}

As shown in Tab.~\ref{tab:direct_reward_recovery}, LFIRL obtains the highest PCC on five of the six tasks and the highest SCC on five of the six tasks. Thus, LFIRL's consistently strong results indicate that the sequential recovery procedure preserves both the linear variation and relative ordering of the ground-truth reward across a range of environments.

\subsection{Ablation Study}\label{sec:ablation}
\begin{wrapfigure}{r}{0.5\textwidth}
\vspace{-3.5em}
\begin{minipage}[t]{0.5\textwidth}
    \centering
    \includegraphics[height=3.3cm]{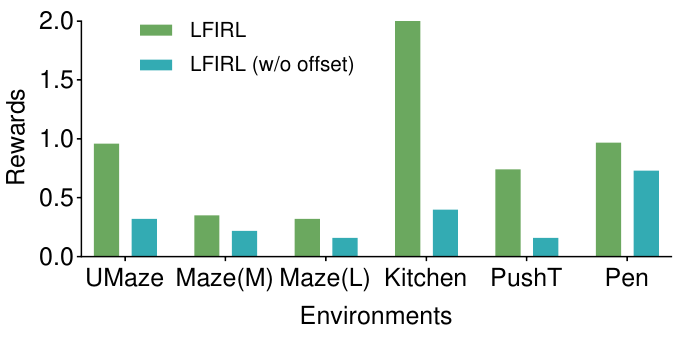}
    \captionof{figure}{\small Ablation results on PointMaze, Kitchen, Pen, and Push-T. We compare LFIRL with its variant that removes the state-dependent offset $b(\mathbf{s})$.}
    \label{fig:ablation}
    \vspace{-3em}
\end{minipage}
\end{wrapfigure}
We conduct an ablation study to evaluate the contribution of the state-dependent offset $b(\mathbf{s})$ in LFIRL. The results are shown in Fig.~\ref{fig:ablation}. Overall, removing $b(\mathbf{s})$ consistently degrades performance across the evaluated tasks, showing that inferring the offset and calibrating soft values are important for recovering a well-aligned value structure. This result supports our analysis in Sec.~\ref{sec:offset}: calibrating this offset is necessary for obtaining a reliable downstream reward.

\subsection{Reduced-Data Experiments}\label{sec:fewer_demo}

We further study how LFIRL behaves when the number of expert demonstrations is reduced. The results are shown in Fig.~\ref{fig:fewer_demo}, where we use the full dataset, one-half of the demonstrations, and one-quarter of the demonstrations. Overall, the performance of LFIRL decreases as the number of demonstrations becomes smaller. This trend is consistent with the design of our method: the first stage recovers the Q function from local supervision around expert actions, and the number of available expert state-action anchors directly affects how accurately this action-value structure can be reconstructed. When the demonstration set is reduced, anchor coverage becomes sparser, which weakens the quality of Q recovery and subsequently affects downstream value and reward recovery. Nevertheless, LFIRL remains competitive across the reduced-data settings.

\begin{figure}[tb]
    \centering
    \includegraphics[width=\columnwidth]{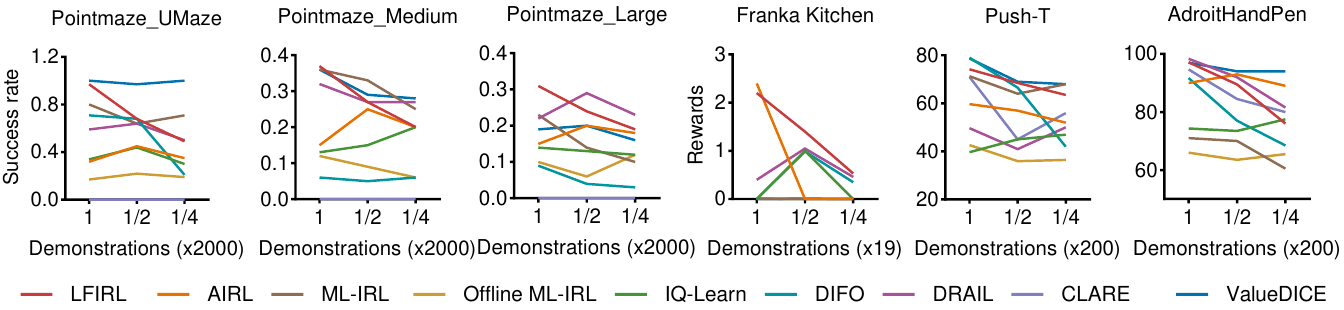}
    \caption{\small
    Reduced-data experiments with the full, one-half, and one-quarter of the demonstrations. We report success rate on PointMaze, average completed subtasks on Franka Kitchen, and reward on Push-T and Pen.
    }
    \label{fig:fewer_demo}
\end{figure}

\section{Discussion}

%\textbf{Demonstrations used in value and reward recovery.} The action value recovery relies on expert demonstrations because the score function induced by diffusion policy provides gradient supervision of expert actions. After $Q$-function has been recovered, however, the recovery of value function only needs to query the frozen $Q$-function to calibrate the optimization target, and the recovery of the reward function only needs transitions to form the Bellman function from the frozen $Q$ and $V$. Therefore, these two stages can use any pre-collected transition sets for sufficient coverage, including expert, random, or mixed trajectories, and maintain sufficient reward coverage and low training time cost at the same time.

\textbf{Source of the time efficiency gain.} The efficiency of LFIRL mainly comes from its linear training pipeline, not merely from using offline training framework. Offline IRL methods can still be slow if they keep a reward-policy loop, since reward updates repeatedly depend on policy or value optimization under the current reward, no matter if the policy interacts with the environment. Even methods that reduce this cost, such as IQ-Learn \citep{garg2021iq}, may avoid training an explicit policy in discrete action spaces but still rely on value or policy quantities induced by the current reward-related objective. LFIRL removes this repeated coupling with each stage using the frozen output of previous stages, and this is why LFIRL remains much faster than offline IRL methods in Tab.~\ref{tab:running_time}.

\textbf{Choice of generative models.} In fact, LFIRL uses a conditional {\em Denoising Diffusion Probabilistic Model} \citep{ho2020denoising} (DDPM)-based  diffusion policy to provide the derivative signal for Q function recovery, following common practice in diffusion-based imitation learning methods. In principle, any state-conditioned generative model that can provide a reliable score, or an action-gradient signal, can be used as the source of derivative supervision in Stage I. In Appendix~\ref{app:fm_score}, we instantiate a version based on {\em Flow Matching} \citep{lipmanflow}, another popular generative model paradigm. However, Appendix~\ref{app:ddpm_flow_matching} shows that, under the same number of training epochs, this variant gives weaker reward recovery. One possible reason is that the two pretraining objectives may have different convergence behavior under the same epoch budget. Conditional DDPM training directly optimizes denoising over multiple noise levels, which provides a strong and stable supervision signal for action refinement around expert demonstrations. Flow Matching instead learns a continuous transport vector field, whose action-generation quality can be more sensitive to integration accuracy, schedule design, and training budget. As a result, the pretrained diffusion policy is less accurate for giving the score signal used for Q function recovery, which then propagates to the final reward.

\section{Conclusion}\label{sec:conclusion}

We propose LFIRL, a novel staged IRL framework that removes the conventional reward-policy optimization loop by turning reward recovery into a sequence of value-recovery problems. Concretely, LFIRL first recovers an optimal Q function from diffusion-policy score signals and estimates the soft value function via Gumbel-regression-style value fitting. It then infers a state-dependent offset to calibrate the recovered soft values and re-estimates the value function using the calibrated Q function. Finally, it recovers the reward function by enforcing Bellman consistency. Experiments on PointMaze, Franka Kitchen, Adroit Hand Pen, and Push-T show that LFIRL matches or surpasses strong baselines in reward recovery while substantially reducing training time, achieving 2-3x efficiency gains in most settings. \noindent{\bf Limitations.} The quality of the recovered Q function depends on sufficient local anchors around expert actions. When anchor coverage is sparse, the method may require more demonstrations to reconstruct the action-value structure accurately. \noindent{\bf Future work} may explore broader classes of generative models beyond DDPM and Flow Matching for providing action-gradient supervision, and develop more sample-efficient ways to recover reliable value and reward information from limited expert demonstrations.

\newpage
\bibliographystyle{plainnat}
\medskip
\begin{small}
\bibliography{reference}
\end{small}

\newpage
%%%%%%%%%%%%%%%%%%%%%%%%%%%%%%%%%%%%%%%%%%%%%%%%%%%%%%%%%%%%

\appendix

\begin{appendices}

\section{Detailed Derivations}\label{app:derivations}

\subsection{Derivation of Diffusion Scores and Their Relation to Soft Q-Gradients}
\label{app:diffusion_score_q}

In this appendix, we derive the identities underlying Section~\ref{sec:pre}. We use \(t\) for the MDP time index and \(k\) for the diffusion step. For a fixed state \(\mathbf{s}_t\), let \(\mathbf{a}_t^0 \sim \pi_E(\cdot\mid \mathbf{s}_t)\) denote a clean expert action, and let \(\mathbf{a}_t^k\) denote its noisy version at diffusion step \(k\).

The DDPM forward process is defined by
\[
q(\mathbf{a}_t^k \mid \mathbf{a}_t^{k-1})
=
\mathcal{N}\!\left(
\mathbf{a}_t^k;
\sqrt{1-\beta_k}\,\mathbf{a}_t^{k-1},
\beta_k I
\right),
\qquad
\alpha_k \coloneqq 1-\beta_k,
\qquad
\bar{\alpha}_k \coloneqq \prod_{j=1}^{k}\alpha_j .
\]
This gives the closed-form marginal
\[
q(\mathbf{a}_t^k \mid \mathbf{a}_t^0)
=
\mathcal{N}\!\left(
\mathbf{a}_t^k;
\sqrt{\bar{\alpha}_k}\,\mathbf{a}_t^0,
(1-\bar{\alpha}_k)I
\right),
\]
or equivalently,
\begin{equation}
\label{eq:app_diff_add_noise}
\mathbf{a}_t^k
=
\sqrt{\bar{\alpha}_k}\,\mathbf{a}_t^0
+
\sqrt{1-\bar{\alpha}_k}\,\boldsymbol{\epsilon},
\qquad
\boldsymbol{\epsilon}\sim \mathcal{N}(0,I).
\end{equation}
This is the same forward noising relation used in Eq.~\eqref{eq:diff_add_noise}.

For each fixed \((\mathbf{s}_t,k)\), let \(p_k(\mathbf{a}\mid \mathbf{s}_t)\) denote the conditional density of noisy actions at diffusion step \(k\), induced by sampling \(\mathbf{a}_t^0\sim\pi_E(\cdot\mid\mathbf{s}_t)\) and then applying Eq.~\eqref{eq:app_diff_add_noise}. The score of this noisy action density is
\[
\nabla_{\mathbf{a}}\log p_k(\mathbf{a}\mid\mathbf{s}_t).
\]
The diffusion policy is trained by the noise-prediction objective
\[
\mathcal{L}_{\mathrm{diff}}({\bm\phi})
=
\mathbb{E}_{\substack{
(\mathbf{s}_t,\mathbf{a}_t^0)\sim \rho^{\pi_E}\\
k\sim \mathrm{Unif}\{1,\dots,K\},\,
\boldsymbol{\epsilon}\sim \mathcal{N}(0,I)
}}
\left\|
\boldsymbol{\epsilon}
-
\boldsymbol{\epsilon}_{\bm\phi}
\left(
\mathbf{a}_t^k,
\mathbf{s}_t,
k
\right)
\right\|_2^2 .
\]
For a fixed \((\mathbf{a}_t^k,\mathbf{s}_t,k)\), the pointwise minimizer of this MSE objective is
\[
\boldsymbol{\epsilon}_{\bm\phi}^*(\mathbf{a}_t^k,\mathbf{s}_t,k)
=
\mathbb{E}\!\left[
\boldsymbol{\epsilon}
\mid
\mathbf{a}_t^k,\mathbf{s}_t,k
\right].
\]
Using the Gaussian corruption in Eq.~\eqref{eq:app_diff_add_noise}, the noisy-action score satisfies
\[
\nabla_{\mathbf{a}}\log p_k(\mathbf{a}\mid\mathbf{s}_t)
\big|_{\mathbf{a}=\mathbf{a}_t^k}
=
-
\frac{1}{\sqrt{1-\bar{\alpha}_k}}
\,
\mathbb{E}\!\left[
\boldsymbol{\epsilon}
\mid
\mathbf{a}_t^k,\mathbf{s}_t,k
\right].
\]
Therefore, the learned noise predictor induces the score estimator
\begin{equation}
\label{eq:app_ddpm_score}
g_{\bm\phi}(\mathbf{a}_t^k,\mathbf{s}_t,k)
\coloneqq
-
\frac{1}{\sqrt{1-\bar{\alpha}_k}}
\boldsymbol{\epsilon}_{\bm\phi}(\mathbf{a}_t^k,\mathbf{s}_t,k)
\approx
\nabla_{\mathbf{a}}\log p_k(\mathbf{a}\mid\mathbf{s}_t)
\big|_{\mathbf{a}=\mathbf{a}_t^k}.
\end{equation}
In the idealized limit of exact denoising, the approximation in Eq.~\eqref{eq:app_ddpm_score} becomes exact for the noisy conditional density \(p_k(\mathbf{a}\mid\mathbf{s}_t)\).

For small diffusion noise, \(\mathbf{a}_t^k\) is close to the clean action \(\mathbf{a}_t^0\), and the noisy-action score approximates the clean expert-policy score in a local neighborhood of the expert action:
\[
g_{\bm\phi}(\mathbf{a}_t^k,\mathbf{s}_t,k)
\approx
\nabla_{\mathbf{a}}\log \pi_E(\mathbf{a}\mid\mathbf{s}_t)
\big|_{\mathbf{a}=\mathbf{a}_t^k}.
\]
If the expert policy is soft-optimal, \(\pi_E=\pi^*\). Under the reference-policy MaxEnt formulation in Eq.~\eqref{eq:boltzmann_mu}, we have
\[
\pi^*(\mathbf{a}\mid\mathbf{s})
=
\mu(\mathbf{a}\mid\mathbf{s})
\exp\!\left(
{\textstyle \frac{1}{\varepsilon}}
\big(
Q^*(\mathbf{s},\mathbf{a})-V^*(\mathbf{s})
\big)
\right).
\]
Taking the action gradient gives
\[
\nabla_{\mathbf{a}}\log \pi^*(\mathbf{a}\mid\mathbf{s})
=
\nabla_{\mathbf{a}}\log \mu(\mathbf{a}\mid\mathbf{s})
+
{\textstyle \frac{1}{\varepsilon}}
\nabla_{\mathbf{a}}Q^*(\mathbf{s},\mathbf{a}),
\]
because \(V^*(\mathbf{s})\) does not depend on \(\mathbf{a}\). Equivalently,
\[
\nabla_{\mathbf{a}}\log
{\textstyle \frac{\pi^*(\mathbf{a}\mid\mathbf{s})}{\mu(\mathbf{a}\mid\mathbf{s})}}
=
{\textstyle \frac{1}{\varepsilon}}
\nabla_{\mathbf{a}}Q^*(\mathbf{s},\mathbf{a}),
\]
which is Eq.~\eqref{eq:q_derivative}.

Combining the low-noise score approximation with Eq.~\eqref{eq:q_derivative}, we obtain
\begin{equation}
\label{eq:app_diffusion_q_connection}
g_{\bm\phi}(\mathbf{a}_t^k,\mathbf{s}_t,k)
-
\nabla_{\mathbf{a}}\log \mu(\mathbf{a}\mid\mathbf{s}_t)
\big|_{\mathbf{a}=\mathbf{a}_t^k}
\approx
{\textstyle \frac{1}{\varepsilon}}
\nabla_{\mathbf{a}}Q^*(\mathbf{s}_t,\mathbf{a})
\big|_{\mathbf{a}=\mathbf{a}_t^k}.
\end{equation}
This is the identity used in Eq.~\eqref{eq:q_stage1_loss}: the diffusion-policy score, after subtracting the reference-policy score, provides the action-gradient supervision for recovering the soft \(Q\)-function.

\subsection{Derivations for Stage I: Q-Recovery and Initial Value Fitting}
\label{app:q_recovery_derivations}

In this subsection, we justify the first-stage recovery of the uncalibrated soft $Q$-function and the initial value function. The key point is that matching action derivatives recovers the action-dependent geometry of $Q^*$, but cannot determine its state-dependent offset.

We next justify the Gumbel-regression objective used to obtain the initial value estimate from the frozen $\hat Q_{\bm\omega}$. For a fixed state $\mathbf{s}$ and a fixed Q function $\widetilde Q$, define
\[
\ell_V(v;\mathbf{s})
:=
\mathbb{E}_{\mathbf{a}\sim\mu(\cdot\mid\mathbf{s})}
\left[
\exp\!\left(\frac{\widetilde Q(\mathbf{s},\mathbf{a})-v}{\varepsilon}\right)
-
\frac{\widetilde Q(\mathbf{s},\mathbf{a})-v}{\varepsilon}
-1
\right].
\]
Differentiating with respect to $v$ gives
\[
\frac{\partial}{\partial v}\ell_V(v;\mathbf{s})
=
-\frac{1}{\varepsilon}
\mathbb{E}_{\mathbf{a}\sim\mu(\cdot\mid\mathbf{s})}
\left[
\exp\!\left(\frac{\widetilde Q(\mathbf{s},\mathbf{a})-v}{\varepsilon}\right)
\right]
+
\frac{1}{\varepsilon}.
\]
The first-order condition is therefore
\[
\mathbb{E}_{\mathbf{a}\sim\mu(\cdot\mid\mathbf{s})}
\left[
\exp\!\left(\frac{\widetilde Q(\mathbf{s},\mathbf{a})-v}{\varepsilon}\right)
\right]
=1.
\]
Rearranging gives
\[
v
=
\varepsilon
\log
\mathbb{E}_{\mathbf{a}\sim\mu(\cdot\mid\mathbf{s})}
\left[
\exp\!\left(\frac{1}{\varepsilon}\widetilde Q(\mathbf{s},\mathbf{a})\right)
\right].
\]
Moreover,
\[
\frac{\partial^2}{\partial v^2}\ell_V(v;\mathbf{s})
=
\frac{1}{\varepsilon^2}
\mathbb{E}_{\mathbf{a}\sim\mu(\cdot\mid\mathbf{s})}
\left[
\exp\!\left(\frac{\widetilde Q(\mathbf{s},\mathbf{a})-v}{\varepsilon}\right)
\right]
>0,
\]
so the minimizer is unique.

In Stage I, we instantiate $\widetilde Q$ as the frozen action value function $\hat Q_{\bm\omega}$. Replacing the expectation over $\mu(\cdot\mid\mathbf{s})$ by actions gives
\[
\mathcal{L}_{V}({\bm\chi})
=
\mathbb{E}_{\mathbf{s},\mathbf{a}\sim\mu(\cdot|\mathbf{s})}
\left[
\exp(z)-z-1
\right],
\qquad
z
=
\frac{\hat Q_{\bm\omega}(\mathbf{s}_t,\mathbf{a}_t)-\hat V_{\bm\chi}(\mathbf{s}_t)}{\varepsilon}.
\]
This is Eq.~\eqref{eq:v_loss_main_refined}. Since $\hat Q_{\bm\omega}$ is fixed in this step, gradients are taken only with respect to ${\bm\chi}$. The resulting value network $\hat V_{\bm\chi}$ is the initial value estimate used in Stage II for offset calibration.

\subsection{Derivations for Stage II: Offset Calibration and Value Re-estimation}
\label{app:offset_v_derivations}

\begin{proof}[Proof of Proposition~\ref{prop:q_up_to_b}]
Fix any state $\mathbf{s}$ and a connected action neighborhood on which the low-noise diffusion-score supervision is reliable. If the derivative-matching term in Eq.~\eqref{eq:q_stage1_loss} is minimized exactly, then for every initial action point $\mathbf{a}$ in this neighborhood,
\[
\frac{1}{\varepsilon}
\nabla_{\mathbf{a}}\hat Q_{\bm\omega}(\mathbf{s},\mathbf{a})
=
g_\phi(\mathbf{a}^k,\mathbf{s},k),
\]
where $\mathbf{a}^k$ denotes the low-noise corrupted version of $\mathbf{a}$. By Eq.~\eqref{eq:diffusion_q_connection}, under the idealized assumptions that $\pi_E=\pi^*$ and the diffusion score is exact,
\[
g_\phi(\mathbf{a}^k,\mathbf{s},k)
=
\frac{1}{\varepsilon}
\nabla_{\mathbf{a}}Q^*(\mathbf{s},\mathbf{a}).
\]
Thus,
\[
\nabla_{\mathbf{a}}\hat Q_{\bm\omega}(\mathbf{s},\mathbf{a})
=
\nabla_{\mathbf{a}}Q^*(\mathbf{s},\mathbf{a}).
\]
Since the gradients match only with respect to the action variable, the two functions can differ by a term that depends on the state but not on the action. Therefore, there exists a state-dependent function $b^*:\mathcal{S}\to\mathbb{R}$ such that
\[
Q^*(\mathbf{s},\mathbf{a})
=
\hat Q_{\bm\omega}(\mathbf{s},\mathbf{a})+b^*(\mathbf{s})
\]
on the sampled action neighborhood. This proves Proposition~\ref{prop:q_up_to_b}.
\end{proof}

\section{Flow-Matching Version of Diffusion Policy}
\label{app:flow_matching_policy}

\subsection{Flow-Matching Policy and Score Estimation}
\label{app:fm_score}

The main method uses a DDPM-based diffusion policy to provide the score estimator
\(g_\phi(\mathbf{a}_t^k,\mathbf{s}_t,k)\). To keep the presentation concise, this subsection follows the simplified case that the reference policy \(\mu\) is uniform. In this case, \(\nabla_{\mathbf a}\log \mu(\mathbf a\mid\mathbf s)=0\) inside the support, so the generative-model score can be directly used as the action-gradient signal for Stage I. This is not the only possible generative pretraining choice. A Flow-Matching-based model can also be used as a conditional action generator, and under a simple Gaussian interpolation path it can provide a score estimator that plays the same role in Stage I. This section explains this replacement.

Flow Matching trains a time-dependent vector field that transports samples from a simple noise distribution to the data distribution \citep{lipmanflow}. In the policy setting, the data distribution is the expert action distribution conditioned on the current state. Let \(u\in[0,1]\) denote the flow time, where \(u=0\) corresponds to Gaussian noise and \(u=1\) corresponds to expert actions. Given an expert pair \((\mathbf{s}_t,\mathbf{a}_t)\sim\mathcal D_E\) and Gaussian noise \(\boldsymbol{\epsilon}\sim\mathcal N(0,I)\), the interpolation is defined as:
\begin{equation}
\label{eq:flow_matching_interpolation}
\mathbf{a}_t^u
=
(1-u)\boldsymbol{\epsilon}
+
u\mathbf{a}_t .
\end{equation}
Here, \(\mathbf{a}_t^u\) is an intermediate noisy action along the path from noise to the expert action. The derivative of this path with respect to \(u\) is
\[
\frac{d}{du}\mathbf{a}_t^u
=
\mathbf{a}_t-\boldsymbol{\epsilon}.
\]
A Flow-Matching policy therefore learns a state-conditioned velocity field
\(v_\phi(\mathbf{a}_t^u,\mathbf{s}_t,u)\) by the regression objective
\begin{equation}
\label{eq:flow_matching_policy_loss}
\mathcal L_{\mathrm{FM}}(\phi)
:=
\mathbb E_{\substack{
(\mathbf{s}_t,\mathbf{a}_t)\sim\mathcal D_E\\
u\sim \mathrm{Unif}(0,1),\,
\boldsymbol{\epsilon}\sim\mathcal N(0,I)
}}
\left[
\left\|
v_\phi(\mathbf{a}_t^u,\mathbf{s}_t,u)
-
(\mathbf{a}_t-\boldsymbol{\epsilon})
\right\|_2^2
\right].
\end{equation}
After training, action generation can be performed by starting from Gaussian noise and integrating
\[
\frac{d}{du}\mathbf{a}^u
=
v_\phi(\mathbf{a}^u,\mathbf{s}_t,u)
\]
from \(u=0\) to \(u=1\). The resulting endpoint is used as an action sampled from the learned policy.

For LFIRL, we need not only a sampling procedure, but also a score signal for action-gradient matching. Let \(p_u(\mathbf{a}\mid\mathbf{s}_t)\) denote the conditional density of the intermediate action \(\mathbf{a}_t^u\) at flow time \(u\). Its score is
\[
\nabla_{\mathbf{a}}\log p_u(\mathbf{a}\mid\mathbf{s}_t),
\]
which indicates how an intermediate action should be adjusted to move toward higher-probability expert actions under state \(\mathbf{s}_t\).

Under the interpolation defined in Eq.~\eqref{eq:flow_matching_interpolation}, this score can be estimated from the learned velocity field. %If the learned velocity is exact, then at a fixed \((\mathbf{a}_t^u,\mathbf{s}_t,u)\) it estimates the conditional mean of the path derivative,
% \[
% \mathbf{a}_t^u
% =
% (1-u)\boldsymbol{\epsilon}
% +
% u\mathbf{a}_t .
% \]
If the learned velocity is exact, then at a fixed \((\mathbf{a}_t^u,\mathbf{s}_t,u)\) it estimates the conditional mean of the path derivative,
\[
v_\phi(\mathbf{a}_t^u,\mathbf{s}_t,u)
\approx
\mathbb E[\mathbf{a}_t-\boldsymbol{\epsilon}
\mid
\mathbf{a}_t^u,\mathbf{s}_t,u].
\]
Combining this relation with the interpolation equation gives
\[
\mathbb E[\boldsymbol{\epsilon}
\mid
\mathbf{a}_t^u,\mathbf{s}_t,u]
\approx
\mathbf{a}_t^u
-
u\,v_\phi(\mathbf{a}_t^u,\mathbf{s}_t,u).
\]
For the same Gaussian path, the score of \(p_u(\mathbf{a}\mid\mathbf{s}_t)\) can be written in terms of this conditional noise estimate. Thus, for \(u<1\), we obtain the Flow-Matching score estimator
\begin{equation}
\label{eq:fm_score_estimator}
g_\phi^{\mathrm{FM}}(\mathbf{a}_t^u,\mathbf{s}_t,u)
:=
-
\frac{
\mathbf{a}_t^u
-
u\,v_\phi(\mathbf{a}_t^u,\mathbf{s}_t,u)
}{
1-u
}
\approx
\nabla_{\mathbf{a}}
\log p_u(\mathbf{a}\mid\mathbf{s}_t)
\big|_{\mathbf{a}=\mathbf{a}_t^u}.
\end{equation}
The expression is used for \(u<1\); at \(u=1\), the path reaches the expert action endpoint and the denominator vanishes. In practice, the analogue of the low-noise DDPM regime is to use flow times close to, but not exactly equal to, \(1\).

This gives the connection needed by LFIRL. When \(u\) is close to \(1\), the intermediate action \(\mathbf{a}_t^u\) is close to the initial expert action. Therefore, \(g_\phi^{\mathrm{FM}}(\mathbf{a}_t^u,\mathbf{s}_t,u)\) estimates the local score of the expert action distribution around that action. %Under the uniform simplification and \(\pi_E=\pi^*\), Eq.~\eqref{eq:q_derivative} gives
% \[
% \nabla_{\mathbf{a}}\log \pi_E(\mathbf{a}\mid\mathbf{s}_t)
% =
% {\textstyle \frac{1}{\varepsilon}}
% \nabla_{\mathbf{a}}Q^*(\mathbf{s}_t,\mathbf{a}).
% \]
Thus, in the low-noise regime, Eq.~\eqref{eq:q_derivative} gives
\begin{equation}
\label{eq:fm_q_connection}
g_\phi^{\mathrm{FM}}(\mathbf{a}_t^u,\mathbf{s}_t,u)
\approx
{\textstyle \frac{1}{\varepsilon}}
\nabla_{\mathbf{a}}Q^*(\mathbf{s}_t,\mathbf{a})
\big|_{\mathbf{a}=\mathbf{a}_t^u}.
\end{equation}

Therefore, a Flow-Matching-based diffusion policy can be used in Stage I by replacing the DDPM score estimator \(g_\phi(\mathbf{a}_t^k,\mathbf{s}_t,k)\) in Eq.~\eqref{eq:q_stage1_loss} with \(g_\phi^{\mathrm{FM}}(\mathbf{a}_t^u,\mathbf{s}_t,u)\).

\begin{figure}[ht]
    \centering
    \includegraphics[width=0.6\columnwidth]{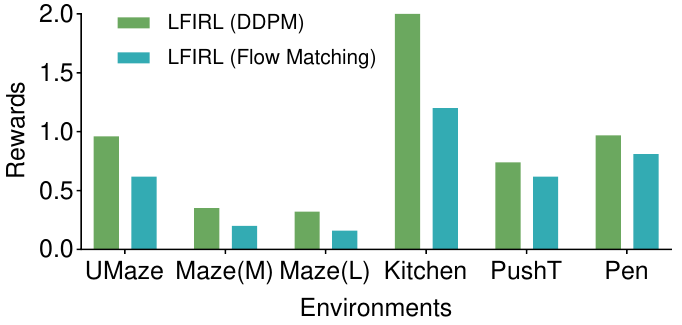}
    \caption{\small
    Additional ablation comparing DDPM-based and Flow-Matching-based pretraining in LFIRL. The two variants use the same subsequent sequential reward-recovery pipeline, but differ in how the pretrained generative policy is obtained. Higher is better.
    }
    \label{fig:flow_matching_ablation}
\end{figure}

\subsection{Additional Results: DDPM vs. Flow Matching Pretraining}
\label{app:ddpm_flow_matching}

In the main experiments, LFIRL uses a DDPM-based diffusion policy to provide the score signal for recovering the soft $Q$-function. We also evaluate an alternative implementation in which the diffusion-policy pretraining stage is replaced by a Flow Matching version, while keeping the remaining LFIRL pipeline unchanged. This comparison isolates the effect of the pretrained generative policy used to provide local action-gradient supervision.

As shown in Fig.~\ref{fig:flow_matching_ablation}, under the same number of pretraining epochs, the Flow-Matching-based variant generally performs worse than the DDPM-based variant. This suggests that, in our setting, DDPM pretraining provides a more effective score signal for the subsequent $Q$-recovery stage. Since LFIRL relies on the pretrained generative policy to supply local action-gradient information, a weaker pretraining stage can directly reduce the quality of the recovered $Q$-function and subsequently degrade the final recovered reward.

\section{Sensitivity to the Reference Policy}
\label{app:reference_policy_sensitivity}

The reference policy $\mu$ specifies the behavioral baseline relative to which the expert policy is interpreted. Under the reference-policy MaxEnt formulation,
\begin{equation}
\label{eq:app_reference_policy_gradient}
\nabla_{\mathbf a}\log\pi_E(\mathbf a\mid\mathbf s)
=
\nabla_{\mathbf a}\log\mu(\mathbf a\mid\mathbf s)
+
\frac{1}{\varepsilon}
\nabla_{\mathbf a}Q^*(\mathbf s,\mathbf a).
\end{equation}
We examine how the choice of $\mu$ affects LFIRL by comparing a uniform reference policy with a state-conditioned Gaussian reference policy.

The two variants use the same expert demonstrations and LFIRL training configuration. The uniform variant uses a uniform distribution over the bounded action space, for which
$\nabla_{\mathbf a}\log\mu(\mathbf a\mid\mathbf s)=0$
in the interior of its support. For the Gaussian variant, we initialize an SAC actor before LFIRL training and freeze it throughout the reward-recovery pipeline. The actor outputs the mean and standard deviation of a state-conditioned action distribution and is not fitted to the expert demonstrations. At each noisy action $\mathbf a_t^k$, we evaluate the score of this frozen reference policy and use
\[
g_{\bm\phi}(\mathbf a_t^k,\mathbf s_t,k)
-
\nabla_{\mathbf a}\log\mu(\mathbf a\mid\mathbf s_t)
\big|_{\mathbf a=\mathbf a_t^k}
\]
as the target for
$\varepsilon^{-1}\nabla_{\mathbf a}\hat Q_{\bm\omega}(\mathbf s_t,\mathbf a_t^k)$.
Actions used for soft-value estimation are also sampled from the same frozen reference policy.

We evaluate these variants on MuJoCo tasks, where the expert demonstrations can be generated by stochastic sampling from pretrained SAC experts. This allows the reference-policy comparison to be conducted with a specified source of expert actions; the generating policy distributions of the robotic-manipulation datasets cannot be verified in the same way. Table~\ref{tab:reference_policy_sensitivity} reports the resulting episodic returns.

\begin{table}[ht]
    \centering
    \small
    \setlength{\tabcolsep}{5pt}
    \renewcommand{\arraystretch}{1.08}
    \caption{\small
        Sensitivity to the reference policy on MuJoCo tasks. Results are episodic returns, reported as mean $\pm$ standard deviation. The expert return is included for context.
    }
    \label{tab:reference_policy_sensitivity}
    \vspace{0.5em}
    \begin{tabular}{lccc}
        \toprule
        \textbf{Environment}
        & \textbf{LFIRL-Uniform}
        & \textbf{LFIRL-Gaussian}
        & \textbf{Expert} \\
        \midrule
        Swimmer-v5
        & 284.56$\pm$27.64
        & 208.42$\pm$7.25
        & 315.55$\pm$1.38 \\
        Walker2d-v5
        & 4933.41$\pm$52.79
        & 4722.40$\pm$230.42
        & 5861.18$\pm$73.99 \\
        Hopper-v5
        & 3413.50$\pm$210.23
        & 3868.04$\pm$79.58
        & 4098.17$\pm$247.70 \\
        \bottomrule
    \end{tabular}
\end{table}

The choice of reference policy affects performance, and the direction of the effect depends on the task. The uniform variant performs better on Swimmer and Walker2d, whereas the Gaussian variant performs better on Hopper. LFIRL therefore is not invariant to $\mu$: changing the behavioral baseline changes the reward interpretation and can alter the recovered reward. Nevertheless, the Gaussian-reference variant remains effective, showing that the recovery procedure is not restricted to the uniform-reference simplification.

This experiment assumes that the reference policy is specified. It does not infer $\mu$ from expert demonstrations. In general, demonstrations alone do not uniquely separate an unknown reference policy from the reward without additional assumptions; jointly inferring the two is beyond the scope of this work.

\section{Sensitivity to Stage-I Hyperparameters}
\label{app:stage1_sensitivity}

LFIRL has several sequential recovery stages, but their hyperparameters need not have the same effect on performance. The value-recovery stage uses a Gumbel-regression-style objective, while the final reward-recovery stage enforces Bellman consistency. Here, we focus on three choices that directly affect the local $Q$-recovery signal in Stage I: the magnitude of the diffusion noise, the value-anchoring coefficient $\lambda$, and the intended margin $\xi$ in value anchoring.

We vary one parameter at a time while holding the other two at their default settings: the low-noise schedule, $\lambda=1$, and $\xi=1$. The noise settings scale the coefficient $\sqrt{1-\bar{\alpha}_k}$ in the forward noising process by $1\times$, $4\times$, and $8\times$, respectively. Table~\ref{tab:stage1_sensitivity} reports PointMaze success rates under these settings.

\begin{table*}[t]
    \centering
    \small
    \setlength{\tabcolsep}{5pt}
    \renewcommand{\arraystretch}{1.08}
    \caption{\small
        Sensitivity of PointMaze success rate to Stage-I hyperparameters. Each group varies one parameter while the remaining parameters are held at their default values. Bold indicates the highest result within each parameter group for each environment.
    }
    \label{tab:stage1_sensitivity}
    \vspace{0.5em}
    \begin{tabular}{lccc ccc ccc}
        \toprule
        & \multicolumn{3}{c}{\textbf{Diffusion-noise magnitude}}
        & \multicolumn{3}{c}{\textbf{Value-anchoring coefficient}}
        & \multicolumn{3}{c}{\textbf{Intended margin}} \\
        \cmidrule(lr){2-4}
        \cmidrule(lr){5-7}
        \cmidrule(lr){8-10}
        \textbf{Environment}
        & Low 
        & Medium 
        & High 
        & $\lambda=0.5$
        & $\lambda=1$
        & $\lambda=2$
        & $\xi=0.5$
        & $\xi=1$
        & $\xi=2$ \\
        \midrule
        UMaze
        & \textbf{0.98} & 0.87 & 0.68
        & 0.92 & \textbf{0.98} & 0.97
        & 0.57 & \textbf{0.98} & 0.61 \\
        Medium
        & \textbf{0.39} & 0.35 & 0.25
        & 0.38 & \textbf{0.39} & 0.35
        & 0.24 & \textbf{0.39} & 0.26 \\
        Large
        & \textbf{0.31} & 0.29 & 0.25
        & 0.26 & 0.31 & \textbf{0.34}
        & 0.18 & \textbf{0.31} & 0.24 \\
        \bottomrule
    \end{tabular}
\end{table*}

\paragraph{Diffusion-noise magnitude.}
Performance decreases as the noise coefficient increases. Increasing
$\sqrt{1-\bar{\alpha}_k}$ moves noisy actions farther from the clean expert-action distribution, where the local approximation between the noisy-action score and the expert-policy score becomes less reliable. The decline is modest from low to medium noise on Medium and Large, but larger at high noise across all three environments. This supports using the low-noise schedule as the default for local $Q$-gradient supervision.

\paragraph{Value-anchoring coefficient $\lambda$.}
Performance is comparatively stable between $\lambda=1$ and $\lambda=2$. Increasing $\lambda$ to $2$ slightly improves Large while causing a small decrease on UMaze and Medium. Reducing it to $0.5$ weakens the anchoring signal and lowers performance, particularly on Large. Thus, the method is less sensitive to this coefficient once anchoring is sufficiently strong, with $\lambda=1$ providing a stable choice across the three environments.

\paragraph{Intended margin $\xi$.}
The intended margin has the strongest effect among the three tested parameters. Both $\xi=0.5$ and $\xi=2$ underperform $\xi=1$ on every environment. A small margin provides insufficient separation between expert actions and their perturbed neighbors. An excessively large margin demands larger local $Q$ differences and may conflict with gradient matching or suppress near-optimal actions. In these experiments, $\xi=1$ provides the most reliable balance across all three environments.

Overall, these results identify the diffusion-noise magnitude and the anchoring margin as the more consequential Stage-I settings. The value-anchoring coefficient is less sensitive once it is large enough to provide effective anchoring.

\section{Additional Experimental Details}\label{app:details}

\subsection{Implementation Details and Hyperparameter Setup}\label{app:setup}

The overall training procedure of LFIRL is given in Alg.~\ref{alg:loop_free_irl}. Key network architecture and hyperparameter settings for each environment are summarized in Tab.~\ref{tab:setup_LFIRL_maze} and Tab.~\ref{tab:setup_LFIRL_robot}.

\begin{table}[H]
    \centering
    \small
    \caption{\small Network architecture and hyperparameter setup for the PointMaze environments.}
    \label{tab:setup_LFIRL_maze}
    \begin{tabular}{lccc}
    \toprule
      & PointMaze\_UMaze-v3 & PointMaze\_Medium-v3 & PointMaze\_Large-v3 \\
    \midrule
      Expert demonstrations
        & 2000 & 2000 & 2000 \\
      Policy pretraining epochs
        & 60 & 40 & 40 \\
      Q/$b$/V/$r$ learning rate
        & 3e-5 & 3e-5 & 3e-5 \\
      Stage passes $(Q,b,V,r)$
        & 8/8/20/20 & 8/8/20/20 & 8/8/20/20 \\
      Q, $b$, V, $r$ hidden layers
        & 256, 256, 256, 256 & 256, 256, 256, 256 & 256, 256, 256, 256 \\
      IRL batch size
        & 256 & 256 & 256 \\
      Discount factor $\gamma$
        & 0.99 & 0.99 & 0.99 \\
      EMA coefficient $\tau$
        & 0.005 & 0.005 & 0.005 \\
    \bottomrule
    \end{tabular}
\end{table}

\begin{table}[H]
    \centering
    \small
    \caption{\small Network architecture and hyperparameter setup for the robotic manipulation environments.}
    \label{tab:setup_LFIRL_robot}
    \begin{tabular}{lccc}
    \toprule
      & FrankaKitchen-v1 & gym\_pusht/PushT-v0 & AdroitHandPen-v1 \\
    \midrule
      Expert demonstrations
        & 19 & 200 & 200 \\
      Policy pretraining epochs
        & 20 & 200 & 40 \\
      Q/$b$/V/$r$ learning rate
        & 3e-5 & 3e-4 & 3e-5 \\
      Stage passes $(Q,b,V,r)$
        & 8/8/20/20 & 40/40/100/100 & 40/40/100/100 \\
      Q, $b$, V, $r$ hidden layers
        & 256, 256, 256, 256 & 256, 256, 256, 256 & 256, 256, 256, 256 \\
      IRL batch size
        & 256 & 256 & 256 \\
      Discount factor $\gamma$
        & 0.99 & 0.99 & 0.99 \\
      EMA coefficient $\tau$
        & 0.005 & 0.005 & 0.005 \\
    \bottomrule
    \end{tabular}
\end{table}

\subsection{Expert Demonstrations Sources}\label{app:expert_demos}

The sources of offline trajectory datasets for experts are provided in Tab.~\ref{tab:experts}. In Maze and Kitchen tasks, we directly use the expert trajectories from the Minari Offline Reinforcement Learning datasets \citep{minari}. For the Push-T task, we use the expert trajectories from the dataset in Diffusion Policy \citep{chi2025diffusion}.

\begin{table}[H]
    \centering
    \small
    \caption{\small The sources of expert policies or demonstrations.}
    \label{tab:experts}
    \setlength{\tabcolsep}{6pt}
    \renewcommand{\arraystretch}{1.05}
    \begin{tabularx}{\linewidth}{lp{12cm}}
    \toprule
      Task   & Source \\
    \midrule
    UMaze           & \url{https://minari.farama.org/datasets/D4RL/pointmaze/umaze-v2/} \\
    Medium          & \url{https://minari.farama.org/datasets/D4RL/pointmaze/medium-v2/} \\
    Large           & \url{https://minari.farama.org/datasets/D4RL/pointmaze/large-v2/} \\
    Franka Kitchen  & \url{https://minari.farama.org/datasets/D4RL/kitchen/complete-v2/} \\
    Push-T          & \url{https://diffusion-policy.cs.columbia.edu/data/training/pusht.zip} \\
    Pen         & \url{https://minari.farama.org/datasets/D4RL/pen/expert-v2/} \\
    \bottomrule
    \end{tabularx}
\end{table}

\subsection{Hardware Information}\label{app:hardware}

Hardware specifications are provided in Tab.~\ref{tab:hardware}.

\begin{table}[H]
    \centering
    \caption{Hardware configuration used in experiments.}
    \label{tab:hardware}
    \begin{tabular}{ll}
    \toprule
      Hardware & Specifications \\
    \midrule
      CPU    & AMD EPYC 7713 64-Core Processor @ 2 GHz \\
      GPU    & NVIDIA A100-SXM4-80GB @ 1215 MHz \\
      Memory & 2 TB \\
    \bottomrule
    \end{tabular}
\end{table}

\end{appendices}

% \newpage
% \input{checklist.tex}

%%%%%%%%%%%%%%%%%%%%%%%%%%%%%%%%%%%%%%%%%%%%%%%%%%%%%%%%%%%%

\end{document}